\documentclass[10pt,twocolumn,letterpaper]{article}

\pdfoutput=1

\usepackage[pagenumbers]{cvpr}

\usepackage[utf8]{inputenc}
\usepackage[T1]{fontenc}
\usepackage{amsfonts}
\usepackage{nicefrac}
\usepackage[table]{xcolor}
\usepackage{graphicx}
\usepackage{amsmath}
\usepackage{amssymb}
\usepackage{booktabs}
\usepackage[english]{babel}
\usepackage{amsthm}
\usepackage{microtype}
\usepackage[inline]{enumitem}
\usepackage{dsfont}
\usepackage{multirow}
\usepackage{pifont}
\usepackage{lscape}
\usepackage{tabularx}
\usepackage{fp}
\usepackage[super]{nth}
\usepackage{xstring}
\usepackage{floatrow}
\usepackage{calc}
\usepackage{wrapfig}
\usepackage{floatflt}
\usepackage{mathtools}
\usepackage[numbers,square,sort&compress]{natbib}
\usepackage[pagebackref,breaklinks,colorlinks]{hyperref}

\makeatletter
\@namedef{ver@everyshi.sty}{}
\makeatother
\usepackage{tikz}

\let\vec\boldsymbol
\newcommand{\lrp}[1]{\left( #1 \right)}

\newcommand{\lrc}[1]{\left\{ #1 \right\}}

\newcommand{\sums}[2]{\sum\limits_{#1}^{#2}}
\newcommand{\cl}[1]{\mathcal{#1}}
\newcommand{\T}{^{\top}}
\newcommand{\real}{\mathbb{R}}
\newcommand{\naturals}{\mathbb{N}}

\newcommand{\E}{\mathbb{E}}
\newcommand{\Prob}{\mathbb{P}}

\newcommand{\customparagraph}[1]{\vspace{-.1em}\noindent\textbf{#1}~}

\let\exp\undefined
\DeclareMathOperator{\exp}{exp}

\makeatletter
\DeclareRobustCommand\onedot{\futurelet\@let@token\@onedot}
\def\@onedot{\ifx\@let@token.\else.\null\fi\xspace}

\def\eg{\emph{e.g}\onedot} 
\def\ie{\emph{i.e}\onedot} 
 
\def\etc{\emph{etc}\onedot} 
 
\def\etal{\emph{et al}\onedot}
\makeatother

\newfloatcommand{capbtabbox}{table}[][\FBwidth]

\newcommand{\tableFontSize}{\small}
\newcommand{\undoCaptionSep}[1][-.3cm]{\vspace{#1}}

\newcommand{\ST}[2][l]{%
    \begin{tabular}{@{}#1@{}}#2\end{tabular}%
}

\definecolor{posDelta}{rgb}{0.0, 0.8, 0.0}
\definecolor{negDelta}{rgb}{1.0, 0.0, 0.0}

\newcommand{\MTC}[1]{\( #1 \)}
\newcommand{\BEST}[1]{\mathbf{#1}}
\newcommand{\STD}[1]{{\scriptsize (\( #1 \))}}
\newcommand{\DELTA}[1]{%
    \(
    \IfDecimal{#1}{%
        \FPifneg{#1}%
            \textcolor{negDelta}{#1}%
        \else\fi%
        \FPifgt{#1}{0}%
            \textcolor{posDelta}{+#1}%
        \else\fi%
        \FPifzero{#1}%
            #1%
        \else\fi%
    }
    {#1}
    \)%
}
\newcommand{\MTCDELTA}[2]{\MTC{#1}~{\scriptsize (\DELTA{#2})}}

\newcommand{\TRUE}{\ding{51}}
\newcommand{\FALSE}{\ding{56}}

\newcommand{\fwName}{DeepMVC\xspace}

\newcommand{\dmsc}{DMSC}
\newcommand{\mvscn}{MvSCN}
\newcommand{\eamc}{EAMC}
\newcommand{\simvc}{SiMVC}
\newcommand{\comvc}{CoMVC}

\newcommand{\mvae}{Multi-VAE}

\newcommand{\MethodName}[3]{#1#2#3}
\newcommand{\sae}{\MethodName{AE}{--}{DDC}\xspace}
\newcommand{\cae}{\MethodName{AE}{Co}{DDC}\xspace}
\newcommand{\saekm}{\MethodName{AE}{--}{KM}\xspace}
\newcommand{\caekm}{\MethodName{AE}{Co}{KM}\xspace}
\newcommand{\mimvc}{InfoDDC\xspace}
\newcommand{\mviic}{MV-IIC\xspace}

\newcommand{\TOTLoss}{\cl L^{\text{Total}}}
\newcommand{\SVSSLLoss}{\cl L^{\text{SV}}}
\newcommand{\MVSSLLoss}{\cl L^{\text{MV}}}
\newcommand{\CMLoss}{\cl L^{\text{CM}}}

\newcommand{\SVSSLWeight}{w^{\text{SV}}}
\newcommand{\MVSSLWeight}{w^{\text{MV}}}
\newcommand{\CMWeight}{w^{\text{CM}}}

\usepackage{pgfplots}
\DeclareUnicodeCharacter{2212}{−}
\usepgfplotslibrary{groupplots,dateplot}
\usetikzlibrary{patterns,shapes.arrows}
\pgfplotsset{compat=newest}

\definecolor{darkblue}{rgb}{0, 0, .6}
\newcommand{\suppdir}[1]{\textcolor{darkblue}{{\small\texttt{#1}}}}
\newcommand{\cthead}[1]{\multicolumn{1}{c}{#1}}

\newtheorem{proposition}{Proposition}

\def\theTitle{On the Effects of Self-supervision and Contrastive Alignment in Deep Multi-view Clustering}

\def\asp{~~}
\def\tsp{\hspace{.8ex}}
\def\theAuthor{%
    Daniel J. Trosten%
        \thanks{UiT Machine Learning group ({\url{machine-learning.uit.no}}) and Visual Intelligence Centre (\url{visual-intelligence.no}).},%
    \asp Sigurd Løkse%
        \footnotemark[1],%
    \asp Robert Jenssen%
        \footnotemark[1]\tsp%
        \thanks{Norwegian Computing Center (\url{nr.no}).}\tsp%
        \thanks{Department of Computer Science, University of Copenhagen.}\tsp%
        \thanks{Pioneer Centre for AI (\url{aicentre.dk}).},%
    \asp Michael C. Kampffmeyer%
        \footnotemark[1]\tsp%
        \footnotemark[2]\\
    Department of Physics and Technology,%
    ~UiT The Arctic University of Norway\\
    {\tt \small firstname[.middle initial].lastname@uit.no}
}

\newcommand{\githubLink}[1][\small]{{#1 \url{https://github.com/DanielTrosten/DeepMVC}}}

\def\NOTSPEC{?\xspace}
\def\NOTSPECLong{Not specified}

\def\NOTINC{--\xspace}
\def\NOTINCLong{Not included}

\def\AL{Al.}
\def\ALLong{Alignment}

\def\MLP{MLP\xspace}

\def\CNN{CNN\xspace}

\def\REC{Reconstruction\xspace}

\def\SE{Sp.~Emb.\xspace}
\def\SELong{Spectral Embedding}

\def\VREC{Variational \REC\xspace}

\def\MNSI{\ST{Min.~superflous\\information}\xspace}

\def\NGH{Ngh.~preserv.\xspace}
\def\NGHLong{Neighborhood preservation}

\def\CCA{CCA\xspace}
\def\CCALong{Canonical correlation analysis}

\def\MSEA{MSE \AL\xspace}

\def\CU{\ST{Commonness\\uniqueness}\xspace}

\def\DA{Distribution \AL\xspace}

\def\KA{Kernel \AL\xspace}

\def\CA{Contrastive \AL\xspace}

\def\MXSI{\ST{Max.~shared\\information}\xspace}

\def\MXMI{Max.~mutual info.\xspace}

\def\OC{IIC Overclustering}

\def\IB{Inf.~Bottleneck\xspace}
\def\IBLong{Information bottleneck}

\def\FV{\nth{1}~view\xspace}

\def\AF{\ST{Affinity\\fusion}\xspace}

\def\WS{Weighted sum\xspace}

\def\CAT{Concat.\xspace}
\def\CATLong{Concatenate}

\def\SHNET{\ST{Shared\\network}\xspace}

\def\AVG{Average\xspace}

\def\ATT{Attention\xspace}

\def\SC{SC\xspace}
\def\SCLong{Spectral clustering}

\def\SR{SR\xspace}
\def\SRLong{Self-representation}

\def\SSR{Sparse SR\xspace}
\def\SSRLong{Sparse self-representation}

\def\KM{\( k \)-means\xspace}

\def\FO{\ST{Fusion\\output}\xspace}

\def\GMM{GMM\xspace}

\def\DEC{DEC\xspace}
\def\DECLong{Deep embedded clustering}

\def\DDC{DDC\xspace}
\def\DDCLong{Deep divergence-based clustering}

\def\GS{Gumbel\xspace}

\def\EO{\ST{Encoder\\output}\xspace}

\def\IIC{IIC}

\def\PreviousMethodsTable{
    \setlength{\tabcolsep}{1mm}
    \rowcolors{2}{gray!25}{white}
    \setlength{\lightrulewidth}{0.1em}
    \begin{tabular}{lllllll} \toprule \rowcolor{gray!40}
        \textbf{Model}                                   & \textbf{Pub.} & \textbf{Enc.}      & \textbf{SV-SSL}  & \textbf{MV-SSL} & \textbf{Fusion} & \textbf{CM} \\ %
        DCCAE~\cite{wangDeepMultiViewRepresentation2015}                & ICML'15          & \MLP              & \REC             & \CCA            & \FV             & \SC \\
        DMSC~\cite{abavisaniDeepMultimodalSubspace2018}                 & J.~STSP'18       & \CNN              & \REC             & \NOTINC         & \AF             & \SR, \SC \\
        DMVSSC~\cite{tangDeepMultiviewSparse2018}                       & ICNCC'18         & \CNN              & \REC             & \NOTINC         & \NOTINC         & \ST{\SSR,\\\SC} \\
        MvSN~\cite{huangMultiSpectralNetSpectralClustering2019}         & T.~CSS'19        & \MLP              & \SE              & \NOTINC         & \WS             & \KM \\
        MvSCN~\cite{huangMultiviewSpectralClustering2019}               & IJCAI'19         & \MLP              & \SE              & \MSEA           & \CAT            & \KM \\
        MvDSCN~\cite{zhuMultiviewDeepSubspace2019}                      & arXiv'19         & \CNN              & \NOTINC          & \REC            & \SHNET          & \SR, \SC \\
        DAMC~\cite{liDeepAdversarialMultiview2019}                      & IJCAI'19         & \MLP              & \NOTINC          & \REC            & \AVG            & \DEC \\
        S2DMVSC~\cite{sunSelfSupervisedDeepMultiView2019}               & ACML'19          & \MLP              & \REC             & \NOTINC         & \MLP            & \SR, \SC \\
        DCMR~\cite{zhangDeepMultimodalClustering2020}                   & PAKDD'20         & \MLP              & \VREC            & \VREC           & \MLP            & \KM \\
        DMMC~\cite{zhangEndToEndDeepMultimodal2020}                     & ICME'20          & \MLP              & \REC             & \NOTINC         & \MLP            & \FO \\
        DCUMC~\cite{zongMultimodalClusteringDeep2020}                   & ICIKM'20         & \MLP              & \REC             & \CU             & \MLP            & \KM \\
        SGLR-MVC~\cite{yinSharedGenerativeLatent2020}                   & AAAI'20          & \MLP              & \NOTINC          & \VREC           & \WS             & \GMM \\
        EAMC~\cite{zhouEndtoEndAdversarialAttentionNetwork2020}         & CVPR'20          & \MLP              & \NOTINC          & \ST{\DA,\\ \KA} & \ATT            & \DDC \\
        MVC-MAE~\cite{duDeepMultipleAutoEncoderBased2021}               & DSE'21           & \MLP              & \ST{\REC,\\\NGH} & \CA             & \NOTINC         & \DEC \\
        SDC-MVC~\cite{xinSelfSupervisedDeepCorrelational2021}           & IJCNN'21         & \MLP              & \NOTINC          & \CCA            & \CAT            & \DEC \\
        DEMVC~\cite{xuDeepEmbeddedMultiview2021}                        & Inf.~Sci.'21     & \CNN              & \REC             & \NOTINC         & \NOTINC         & \DEC \\
        SiMVC~\cite{trostenReconsideringRepresentationAlignment2021}    & CVPR'21          & \ST{\MLP/\\ \CNN} & \NOTINC          & \NOTINC         & \WS             & \DDC \\
        CoMVC~\cite{trostenReconsideringRepresentationAlignment2021}    & CVPR'21          & \ST{\MLP/\\ \CNN} & \NOTINC          & \CA             & \WS             & \DDC \\
        Multi-VAE~\cite{xuMultiVAELearningDisentangled2021}             & ICCV'21          & \CNN              & \NOTINC          & \VREC           & \CAT            & \ST{\GS,\\\KM} \\
        DMIM~\cite{maoDeepMutualInformation2021}                        & IJCAI'21         & \MLP              & \MNSI            & \MXSI           & \NOTSPEC        & \EO \\
        AMvC~\cite{wangAdversarialMultiviewClustering2022}              & TNNLS'22         & \MLP              & \NOTINC          & \REC            & \WS             & \DEC \\
        SIB-MSC~\cite{wangSelfSupervisedInformationBottleneck2022}      & arXiv'22         & \CNN              & \NOTINC          & \ST{\REC,\\\IB} & \AF             & \SR, \SC \\
    \bottomrule
    \end{tabular}
}

\def\MethodsAbbreviations{
    \begin{flushleft}
        \textbf{Abbreviations:}
        ``\NOTINC'' = \NOTINCLong,
        ``\NOTSPEC'' = \NOTSPECLong,
        \AL~= \ALLong,
        \CAT = \CATLong,
        \CCA = \CCALong,
        \DDC = \DDCLong,
        \DEC = \DECLong,
        \IB = \IBLong,
        \NGH = \NGHLong,
        \SC = \SCLong,
        \SE = \SELong,
        \SR = \SRLong,
        \SSR = \SSRLong,
    \end{flushleft}
}

\def\PreviousAndNewMethodsTable{
    \setlength{\tabcolsep}{.9mm}
    \rowcolors{2}{gray!25}{white}
    \setlength{\lightrulewidth}{0.1em}
    \begin{tabular}{lllllll} \toprule \rowcolor{gray!40}
        \textbf{Model}                                                            & \textbf{Pub.} & \textbf{Enc.} & \textbf{SV-SSL}  & \textbf{MV-SSL} & \textbf{Fusion} & \textbf{CM} \\ %
        DCCAE~\cite{wangDeepMultiViewRepresentation2015}                          & ICML'15        & \MLP              & \REC             & \CCA            & \FV             & \SC \\
        DMSC~\cite{abavisaniDeepMultimodalSubspace2018}                           & J.~STSP'18     & \CNN              & \REC             & \NOTINC         & \AF             & \SR, \SC \\
        MvSCN~\cite{huangMultiviewSpectralClustering2019}                         & IJCAI'19       & \MLP              & \SE              & \MSEA           & \CAT            & \KM \\
        DAMC~\cite{liDeepAdversarialMultiview2019}                                & IJCAI'19       & \MLP              & \NOTINC          & \REC            & \AVG            & \DEC \\
        SGLR-MVC~\cite{yinSharedGenerativeLatent2020}                             & AAAI'20        & \MLP              & \VREC            & \VREC           & \WS             & \GMM \\
        EAMC~\cite{zhouEndtoEndAdversarialAttentionNetwork2020}                   & CVPR'20        & \MLP              & \NOTINC          & \ST{\DA,\\ \KA} & \ATT            & \DDC \\
        SiMVC~\cite{trostenReconsideringRepresentationAlignment2021}              & CVPR'21        & \MLP/\CNN         & \NOTINC          & \NOTINC         & \WS             & \DDC \\
        CoMVC~\cite{trostenReconsideringRepresentationAlignment2021}              & CVPR'21        & \MLP/\CNN         & \NOTINC          & \CA             & \WS             & \DDC \\
        Multi-VAE~\cite{xuMultiVAELearningDisentangled2021}                       & ICCV'21        & \CNN              & \NOTINC          & \VREC           & \CAT            & \ST{\GS,\\\KM} \\
        DMIM~\cite{maoDeepMutualInformation2021}                                  & IJCAI'21       & \MLP              & \MNSI            & \MXSI           & \NOTSPEC        & \EO \\
        \midrule \rowcolor{gray!40}
        \textbf{Model} & \textbf{Category}        & \textbf{Enc.}      & \textbf{SV-SSL}            & \textbf{MV-SSL} & \textbf{Fusion} & \textbf{CM} \\
        \saekm         & Simple                   & \MLP/\CNN   & \REC                       & \NOTINC         & \CAT            & \KM \\
        \sae           & Simple                   & \MLP/\CNN   & \REC                       & \NOTINC         & \WS             & \DDC \\
        \caekm         & \CA                      & \MLP/\CNN   & \REC                       & \CA             & \CAT            & \KM \\
        \cae           & \CA                      & \MLP/\CNN   & \REC                       & \CA             & \WS             & \DDC \\
        \mimvc         & Mutual info.             & \MLP/\CNN   & \NOTINC                    & \MXMI           & \WS             & \DDC \\
        \mviic         & Mutual info.             & \MLP/\CNN   & \NOTINC                    & \OC             & \NOTINC         & \IIC, \KM \\
    \bottomrule
    \end{tabular}
}

\def\layer#1{\texttt{#1}}
\def\Conv{\layer{Conv}}
\def\BatchNormalization{\layer{BatchNorm}}
\def\RELU{\layer{ReLU}}
\def\MaxPool{\layer{MaxPool}}
\def\Dense{\layer{Dense}}
\def\ConvTranspose{\layer{TransposeConv}}
\def\UpSample{\layer{UpSample}}
\def\Sigmoid{\layer{Sigmoid}}

\def\AllArch{
    \begin{tabular}{llll} \toprule
        \cthead{CNN encoder} & \cthead{CNN decoder} & \cthead{MLP encoder} & \cthead{MLP decoder} \\ \midrule
        \Conv \( (64 \times 3 \times 3) \) & \UpSample \( (2 \times 2) \) & \Dense \( (1024) \) & \Dense \( (256) \) \\
        \RELU & \ConvTranspose \( (64 \times 3 \times 3) \) & \BatchNormalization & \BatchNormalization \\
        \Conv \( (64 \times 3 \times 3) \) & \RELU & \RELU & \RELU \\
        \BatchNormalization & \ConvTranspose \( (64 \times 3 \times 3) \) & \Dense \( (1024) \) & \Dense \( (1024) \) \\
        \RELU & \BatchNormalization & \BatchNormalization & \BatchNormalization \\
        \MaxPool \( (2 \times 2) \) & \RELU & \RELU & \RELU \\
        \Conv \( (64 \times 3 \times 3) \) & \UpSample \( (2 \times 2) \) & \Dense \( (1024) \) & \Dense \( (1024) \) \\
        \RELU & \ConvTranspose \( (64 \times 3 \times 3) \) & \BatchNormalization & \BatchNormalization \\
        \Conv \( (64 \times 3 \times 3) \) & \RELU & \RELU & \RELU \\
        \BatchNormalization & \ConvTranspose \( (1 \times 3 \times 3) \) & \Dense \( (1024) \) & \Dense \( (1024) \) \\
        \RELU & \Sigmoid & \BatchNormalization & \BatchNormalization \\
        \MaxPool \( (2 \times 2) \) & & \RELU & \RELU \\
        && \Dense \( (256) \) & \Dense \( (\texttt{input dim}) \) \\
        &&& \Sigmoid \\
        \bottomrule
    \end{tabular}
}

\def\EAMCLearningRate{\( \dagger \) = EAMC~\cite{zhouEndtoEndAdversarialAttentionNetwork2020} has different learning rates for the different components, namely \( 10^{-5} \) for the encoders and clustering module, and \( 10^{-4} \) for the attention module and discriminator.}

\def\hyperparametersAll{
    \begin{tabular}{lccccccc} \toprule
        Model  & Batch size & Learning rate         & \( \SVSSLWeight \) & \( \MVSSLWeight \) & \( \CMWeight \) & Pre-train & Gradient clip \\ \cmidrule(lr){1-1} \cmidrule(lr){2-8}
        \dmsc  & \( 100 \)  & \( 10^{-3} \)         & \( 1.0 \)          & --                 & --              & \TRUE     & \( 10 \) \\
        \mvscn & \( 512 \)  & \( 10^{-4} \)         & \( 0.999 \)        & \( 0.001 \)        & --              & \FALSE    & \( 10 \) \\
        \eamc  & \( 100 \)  & \( \dagger \)         & --                 & \( 1.0 \)          & \( 1.0 \)       & \FALSE    & \( 10 \) \\
        \simvc & \( 100 \)  & \( 10^{-3} \)         & --                 & --                 & \( 1.0 \)       & \FALSE    & \( 10 \) \\
        \comvc & \( 100 \)  & \( 10^{-3} \)         & --                 & \( 0.1 \)          & \( 1.0 \)       & \FALSE    & \( 10 \) \\
        \mvae  & \( 64 \)   & \( 5 \cdot 10^{-4} \) & --                 & \( 1.0 \)          & --              & \TRUE     & \( 10 \) \\
        \sae   & \( 100 \)  & \( 10^{-3} \)         & \( 1.0 \)          & --                 & \( 1.0 \)       & \TRUE     & \( 10 \) \\
        \cae   & \( 100 \)  & \( 10^{-3} \)         & \( 1.0 \)          & \( 0.1 \)          & \( 1.0 \)       & \TRUE     & \( 10 \) \\
        \saekm & \( 100 \)  & \( 10^{-3} \)         & \( 1.0 \)          & --                 & --              & \FALSE    & \( 10 \) \\
        \caekm & \( 100 \)  & \( 10^{-3} \)         & \( 1.0 \)          & \( 0.1 \)          & --              & \FALSE    & \( 10 \) \\
        \mimvc & \( 256 \)  & \( 10^{-3} \)         & --                 & \( 0.1 \)          & \( 1.0 \)       & \FALSE    & \( 10 \) \\
        \mviic & \( 256 \)  & \( 10^{-3} \)         & --                 & \( 0.01 \)         & \( 1.0 \)       & \FALSE    & \( 10 \) \\
        \bottomrule
    \end{tabular}
}

\begin{document}
    \title{\theTitle}
    \author{\theAuthor}
    \maketitle

    \begin{abstract}
        
Self-supervised learning is a central component in recent approaches to deep multi-view clustering (MVC).
However, we find large variations in the development of self-supervision-based methods for deep MVC, potentially slowing the progress of the field.
To address this, we present \fwName, a unified framework for deep MVC that includes many recent methods as instances.
We leverage our framework to make key observations about the effect of self-supervision, and in particular, drawbacks of aligning representations with contrastive learning.
Further, we prove that contrastive alignment can negatively influence cluster separability, and that this effect becomes worse when the number of views increases.
Motivated by our findings, we develop several new \fwName instances with new forms of self-supervision.
We conduct extensive experiments and find that
(i) in line with our theoretical findings, contrastive alignments decreases performance on datasets with many views;
(ii) all methods benefit from some form of self-supervision;
and (iii) our new instances outperform previous methods on several datasets.
Based on our results, we suggest several promising directions for future research.
To enhance the openness of the field, we provide an open-source implementation of \fwName, including recent models and our new instances.
Our implementation includes a consistent evaluation protocol, facilitating fair and accurate evaluation of methods and components%
\footnote{Code: \githubLink[\footnotesize]}.

    \end{abstract}

    \section{Introduction}
        
\begin{figure}
    \centering
    \includegraphics[width=\columnwidth]{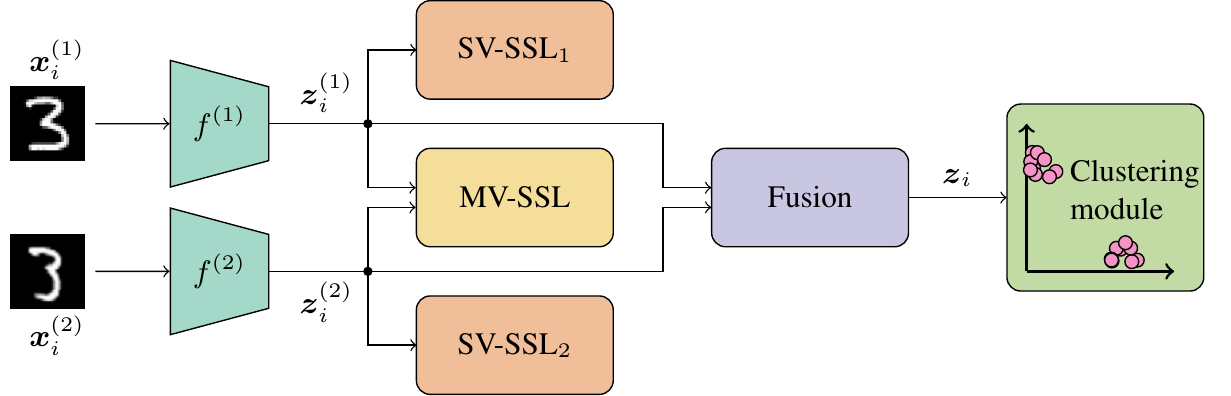}
    \caption{Overview of the \fwName framework for a two-view dataset. Different colors denote different components. The framework is generalizable to an arbitrary number of views by adding more view specific encoders (\( f \)) and SV-SSL blocks.}
    \label{fig:framework}
\end{figure}

Multi-view clustering (MVC) generalizes the clustering task to data where the instances to be clustered are observed through multiple views, or by multiple modalities.
In recent years, deep learning architectures have seen widespread adoption in MVC, resulting in the \emph{deep MVC} subfield.
Methods developed within this subfield have shown state-of-the-art clustering performance on several multi-view datasets~\cite{zhouEndtoEndAdversarialAttentionNetwork2020,trostenReconsideringRepresentationAlignment2021,xuMultiVAELearningDisentangled2021,maoDeepMutualInformation2021,wangAdversarialMultiviewClustering2022,wangSelfSupervisedInformationBottleneck2022}, largely outperforming traditional, non-deep-learning-based methods~\cite{zhouEndtoEndAdversarialAttentionNetwork2020}.

Despite these promising developments, we identify significant drawbacks with the current state of the field.
Self-supervised learning (SSL) is a crucial component in many recent methods for deep MVC~\cite{zhouEndtoEndAdversarialAttentionNetwork2020,trostenReconsideringRepresentationAlignment2021,xuMultiVAELearningDisentangled2021,maoDeepMutualInformation2021,wangAdversarialMultiviewClustering2022,wangSelfSupervisedInformationBottleneck2022}.
However, the large number of methods, all with unique components and arguments about how they work, makes it challenging to identify clear directions and trends in the development of new components and methods.
Methodological research in deep MVC thus lacks foundation and consistent directions for future advancements.
This effect is amplified by large variations in implementation and evaluation of new methods.
Architectures, data preprocessing and data splits, hyperparameter search strategies, evaluation metrics, and model selection strategies all vary greatly across publications, making it difficult to properly compare methods from different papers.
To address these challenges, we present a unified framework for deep MVC, coupled with a rigorous and consistent evaluation protocol, and an open-source implementation.
Our main contributions are summarized as follows:

\customparagraph{(1)~~\fwName framework.}
    Despite the variations in the development of new methods, we recognize that the majority of recent methods for deep MVC can be decomposed into the following fixed set of components:
    \begin{enumerate*}[label=(\roman*)]
        \item view-specific encoders;
        \item single-view SSL;
        \item multi-view SSL;
        \item fusion; and
        \item clustering module.
    \end{enumerate*}
    The \fwName framework (Figure~\ref{fig:framework}) is obtained by organizing these components into a unified deep MVC model.
    Methods from previous work can thus be regarded as \emph{instances} of \fwName.

\customparagraph{(2)~~Theoretical insight on alignment and number of views.}
    Contrastive alignment of view-specific representations is an MV-SSL component that has demonstrated state-of-the-art performance in deep MVC~\cite{trostenReconsideringRepresentationAlignment2021}.
    We study a simplified case of deep MVC, and find that contrastive alignment can only decrease the number of separable clusters in the representation space.
    Furthermore, we show that this potential negative effect of contrastive alignment becomes worse when the number of views in the dataset increases.

\customparagraph{(3)~~New instances of \fwName.}
    Inspired by initial findings from the \fwName framework, and our theoretical findings on contrastive alignment, we develop \( 6 \) new instances of \fwName, which outperform current state-of-the-art methods on several multi-view datasets.
    The new instances include both novel and well-known types of self-supervision, fusion and clustering modules.

\customparagraph{(4)~~Open-source implementation of \fwName and evaluation protocol.}
    We provide an open-source implementation of \fwName, including several recent methods and our new instances.
    The implementation includes a shared evaluation protocol for all methods, and all datasets used in the experimental evaluation.
    By making the datasets and our implementation openly available, we aim to facilitate simpler development of new methods, as well as rigorous and accurate comparisons between methods and components.

\customparagraph{(5)~~Evaluation of methods and components.}
    We use the implementation of \fwName to evaluate and compare several recent state-of-the-art methods and SSL components -- both against each other, and against our new instances.
    In our experiments, we both provide a consistent evaluation of methods in deep MVC, and systematically analyze several SSL-based components -- revealing how they behave under different experimental settings.

\customparagraph{The main findings from our work are:}
    \begin{itemize}[topsep=0pt, noitemsep, leftmargin=*]
        \item We show that aligning view-specific representations can have a negative impact on cluster separability, especially when the number of views becomes large.
            In our experiments, we find that contrastive alignment of view-specific representations works well for datasets with few views, but \emph{significantly degrades performance when the number of views increases}.
            Conversely, we find that maximization of mutual information performs well with many views, while not being as strong with fewer views.
        \item All methods included in our experiments benefit from at least one form of SSL.
            In addition to contrastive alignment for few views and mutual information maximization for many views, we find that autoencoder-style reconstruction improves overall performance of methods.
        \item Properties of the datasets, such as class (im)balance and the number of views, heavily impact the performance of current MVC approaches.
            There is thus not a single ``state-of-the-art'' -- it instead depends on the datasets considered.
        \item Results reported by the original authors differ significantly from the performance of our re-implementation for some baseline methods, illustrating the necessity of a unified framework with a consistent evaluation protocol.
    \end{itemize}

    \section{\fwName framework}
    \label{sec:framework}
        
In this section we present the \fwName framework, its components and their purpose, and how they fit together.
This allows us to, in the next section, summarize recent work on deep MVC, and illustrate that the majority of recent methods can be regarded as instances of \fwName.

Suppose we have a multi-view dataset consisting of \( n \) instances and \( V \) views, and let \( \vec x_i^{(v)} \) be the observation of instance \( i \) through view \( v \).
The task of the \fwName framework is then to cluster the instances into \( k \) clusters, and produce cluster membership indicators \( \alpha_{ic} \in [0, 1] \), \( c = 1, \dots, k \).
The framework is illustrated in Figure~\ref{fig:framework}.
It consists of the following components.

\customparagraph{View-specific encoders.}
The framework is equipped with \( V \) deep neural network encoders \( f^{(1)}, \dots, f^{(V)} \), one for each view.
Their task is to produce the view-specific representations \( \vec z_i^{(v)} = f^{(v)}(\vec x_i^{(v)}) \) from the input data.

\customparagraph{Single-view self-supervised learning (SV-SSL).}
The SV-SSL component consists of a set of pretext tasks (auxiliary objectives) that are designed to aid the optimization of the view-specific encoders.
Specifically, the tasks should be designed to help the encoders learn representations that simplify the clustering task.
Each pretext task is specific to its designated view, and is isolated from all other views.

\customparagraph{Multi-view self-supervised learning (MV-SSL).}
MV-SSL is similar to SV-SSL -- they are both self-supervised modules whose goals are to help the encoders learn representations that are suitable for clustering.
However, MV-SSL leverages all views simultaneously in the pretext tasks, allowing the model to exploit information from all views simultaneously to learn better features.

\customparagraph{Fusion.}
This component combines view-specific representations into a shared representation for all views.
Fusion is typically done using a (weighted) average~\cite{liDeepAdversarialMultiview2019,trostenReconsideringRepresentationAlignment2021}, or by concatenation~\cite{huangMultiviewSpectralClustering2019,xinSelfSupervisedDeepCorrelational2021,xuMultiVAELearningDisentangled2021}.
More complex fusion modules using \eg attention mechanisms~\cite{zhouEndtoEndAdversarialAttentionNetwork2020}, are also possible.

\customparagraph{Clustering module (CM).}
The CM is responsible for determining cluster memberships based on view-specific or fused representations.
The CM can consist of a traditional clustering method, such as \( k \)-means~\cite{macqueenMethodsClassificationAnalysis1967} or Spectral Clustering~\cite{shiNormalizedCutsImage2000}.
Such CMs are applied to the fused representations after other components have been trained, resulting in a two-stage method that first learns fused representations, and then applies a clustering algorithm to these representations.

Alternatively, the CM can be integrated into the model~\cite{zhouEndtoEndAdversarialAttentionNetwork2020,trostenReconsideringRepresentationAlignment2021}, allowing it to be trained alongside other components, resulting in fused representations that are better suited for clustering.

\customparagraph{Loss functions and training.}
The loss functions for the models are specified by the SV-SSL, MV-SSL, and CM components.
To train the model, the terms arising from the different components can be minimized simultaneously or they can be minimized in an alternating fashion.
It is also possible with pre-training/fine-tuning setups where the model is pre-trained with one subset of the losses and fine-tuned with another subset of the losses.

We note that \fwName is a conceptual framework, and that a model is not necessarily completely described by a list of its \fwName components.
Consequently, it is possible for two models with similar \fwName components to have slightly different implementations.
This illustrates the importance of our open-source implementation of \fwName, which allows the implementation of a model to be completely transparent.

    \section{Previous methods as instances of \fwName}
    \label{sec:relatedWork}
        
\begin{table*}[!t]
    \centering
    {
        \small
        \renewcommand{\arraystretch}{0.85}
        \PreviousAndNewMethodsTable
    }
    \caption{
        Overview of selected methods from previous work (top) and proposed new instances (bottom), and their \fwName components. The complete table of previous methods is included in the supplementary.
        \textbf{Abbreviations:}
        ``\NOTINC'' = \NOTINCLong,
        ``\NOTSPEC'' = \NOTSPECLong,
        \AL~= \ALLong,
        \CAT = \CATLong,
        \CCA = \CCALong,
        \DDC = \DDCLong,
        \DEC = \DECLong,
        \SC = \SCLong,
        \SE = \SELong,
        \SR = \SRLong
    }
    \label{tab:previousMethods}
\end{table*}

Table~\ref{tab:previousMethods} shows selected recent methods for deep MVC (the full table can be found in the supplementary), categorized by its \fwName components, allowing for systematic comparisons between models\footnote{Note, here we limit our discussion to MVC approaches without missing data. While most of the theoretical and empirical results also generalize to the emerging incomplete MVC setting~\cite{xuAdversarialIncompleteMultiview2019,wenCDIMCnetCognitiveDeep2020,linCOMPLETERIncompleteMultiView2021}, we consider it out of scope of this work.}.

\customparagraph{View-specific encoders.}
    As can be seen in Table~\ref{tab:previousMethods}, all models use view-specific encoders to encode views into view-specific embeddings.
    Multi-layer perceptrons (MLPs) are usually used for vector data, while convolutional neural networks (CNNs) are used for image data.

\customparagraph{SV-SSL and MV-SSL.}
    Alongside the encoder network, many methods use decoders to reconstruct the original views from either the view-specific representations or the fused representation.
    The reconstruction task is the most common self-supervised pretext task, both for SV-SSL and for MV-SSL.
    In SV-SSL, the views are reconstructed from their respective view-specific representations, without any influence from the other views%
    ~\cite{wangDeepMultiViewRepresentation2015,abavisaniDeepMultimodalSubspace2018,tangDeepMultiviewSparse2018,sunSelfSupervisedDeepMultiView2019,zhangDeepMultimodalClustering2020,zhangEndToEndDeepMultimodal2020,zongMultimodalClusteringDeep2020,xuDeepEmbeddedMultiview2021}.
    In MV-SSL, it is common to either do
    \begin{enumerate*}[label=(\roman*)]
      \item cross view reconstruction, where all views are reconstructed from all view-specific representations~\cite{zhuMultiviewDeepSubspace2019}; or
      \item fused view reconstruction, where all views are reconstructed from the fused representation~\cite{zhuMultiviewDeepSubspace2019,liDeepAdversarialMultiview2019,yinSharedGenerativeLatent2020,wangAdversarialMultiviewClustering2022}.
    \end{enumerate*}

    Aligning distributions of view-specific representations is another MV-SSL pretext task that has been shown to produce representations suitable for clustering~\cite{zhouEndtoEndAdversarialAttentionNetwork2020}.
    However,~\cite{trostenReconsideringRepresentationAlignment2021} demonstrate that the alignment of representation distributions can be detrimental to the clustering performance -- especially in the presence of noisy or non-informative views.
    To avoid these drawbacks, they propose Simple MVC (SiMVC) and Contrastive MVC (CoMVC).
    In the former, the alignment is dropped altogether, whereas the latter includes a contrastive learning module that aligns the view-specific representations at the instance level, rather than at the distribution level.

\customparagraph{Clustering modules.}
    Many deep MVC methods use subspace-based clustering modules~\cite{abavisaniDeepMultimodalSubspace2018,zhuMultiviewDeepSubspace2019,sunSelfSupervisedDeepMultiView2019,wangSelfSupervisedInformationBottleneck2022}.
    These methods assume that representations, either view-specific or fused, can be decomposed into linear combinations of each other.
    Once determined, the self-representation matrix containing the coefficients for these linear combinations is used to compute an affinity matrix, which in turn is used as input to spectral clustering.
    This requires the full \( n \times n \) self-representation matrix available in memory, which is computationally prohibitive for datasets with a large number of instances.

    Other clustering modules have also been adapted to deep MVC. The clustering module from Deep Embedded Clustering (DEC)~\cite{xieUnsupervisedDeepEmbedding2016}, for instance, is used in several models~\cite{liDeepAdversarialMultiview2019,duDeepMultipleAutoEncoderBased2021,xinSelfSupervisedDeepCorrelational2021,xuDeepEmbeddedMultiview2021,wangAdversarialMultiviewClustering2022}.
    Recently, the Deep Divergence-Based Clustering (DDC)~\cite{kampffmeyerDeepDivergencebasedApproach2019} clustering module has been used in several state-of-the-art deep MVC models~\cite{zhouEndtoEndAdversarialAttentionNetwork2020,trostenReconsideringRepresentationAlignment2021}.
    In addition, some methods treat either the encoder output or the fused representation as cluster membership vectors~\cite{zhangEndToEndDeepMultimodal2020,maoDeepMutualInformation2021}.

    Lastly, some methods adopt a two-stage approach, where they first use the SSL components to learn representations, and then apply a traditional clustering method, such as \( k \)-means~\cite{huangMultiSpectralNetSpectralClustering2019,huangMultiviewSpectralClustering2019,zhangDeepMultimodalClustering2020,zongMultimodalClusteringDeep2020,xuMultiVAELearningDisentangled2021}, a Gaussian mixture model~\cite{yinSharedGenerativeLatent2020}, or spectral clustering~\cite{wangDeepMultiViewRepresentation2015}, on the trained representations.

    \section{Contrastive alignment in deep MVC}
    \label{sec:motivatingExperiments}
        
As can be seen in Table~\ref{tab:previousMethods}, SSL components are crucial in recent state-of-the-art methods for deep MVC.
Recent works have focused on aligning view-specific representations~\cite{zhouEndtoEndAdversarialAttentionNetwork2020,trostenReconsideringRepresentationAlignment2021}, and in particular, contrastive alignment~\cite{trostenReconsideringRepresentationAlignment2021}.
We study a simplified setting where, for each view, all observations in a cluster are located at the same point.
This allows us to prove that aligning view-specific representations has a negative impact on the cluster separability after fusion.
This is the same starting point as in~\cite{trostenReconsideringRepresentationAlignment2021}, but we extend the analysis to investigate contrastive alignment when the number of views increases.

\def\propositionMinSeparbleClusters{
    \begin{proposition}[Adapted from~\cite{trostenReconsideringRepresentationAlignment2021}]
        \label{prop:prop1}
        Suppose the dataset consists of \( n \) instances, \( V \) views, and \( k \) ground-truth clusters, and that view-specific representations are computed with view-specific encoders as \( \vec z^{(v)}_i = f^{(v)}(\vec x^{(v)}_i) \).
        Furthermore, assume that:
        \begin{itemize}
            \item For all \( v \in \{1, \dots V\} \) and \( j \in \{1, \dots, k \} \),
                \begin{align}
                    \forall i \in \cl C_j, \vec x^{(v)}_i = \vec c^{(v)} \in \{ \vec c_1^{(v)}, \dots, \vec c_{k_v}^{(v)} \}
                \end{align}
                where \( \cl C_j \) is the set of indices for instances in cluster \( j \), and \( k_v \in \{ 1, \dots, k \} \) is the number of separable clusters in view \( v \).
            \item Representations are fused as \( \vec z_i = \sum_{v=1}^{V} w_v \vec z_i^{(v)} \) where \( w_1, \dots, w_V \) are all unique.
            \item For all \( j \in \{1, \dots, k \} \),
                \begin{align}
                    \forall i \in \cl C_j, \vec z_i = \vec z^{\star} \in \{ \vec z_1^\star, \dots, \vec z_\kappa^\star \}
                \end{align}
        \end{itemize}
        Then if \( \vec z_i^{(1)} = \dots = \vec z_i^{(V)} \) (perfectly aligned view-specific representations),
        \begin{align}
            \kappa = \min \{k, (\min\limits_{v=1,\dots,V} \{k_v \})^V \}
        \end{align}
    \end{proposition}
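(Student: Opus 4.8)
The plan is to convert the statement into a finite count of distinct fused representations and then supply matching upper and lower bounds, letting the number of views $V$ enter through a product over the views. First I would use the collapse hypotheses to make everything finite. Since every instance in cluster $\cl C_j$ has the \emph{same} observation $\vec c^{(v)}_{(j)} \in \{\vec c^{(v)}_1,\dots,\vec c^{(v)}_{k_v}\}$ in view $v$, and $f^{(v)}$ is a deterministic map, the view-specific representation $\vec z^{(v)}_i$ is constant over $i \in \cl C_j$; call this common value $\vec z^{(v)}_{(j)}$. Then the fused representation $\vec z_i = \sum_{v} w_v \vec z^{(v)}_{(j)}$ is constant over $\cl C_j$ as well, consistent with the third hypothesis, so $\kappa$ is exactly the number of distinct fused vectors produced by the $k$ clusters, and in particular $\kappa \le k$ — one of the two terms in the $\min$.

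For the other term I would encode each cluster $j$ by the $V$-tuple $(\vec z^{(1)}_{(j)},\dots,\vec z^{(V)}_{(j)})$ of its per-view representations; since $\vec z_i$ is a function of this tuple, $\kappa$ is at most the number of distinct tuples that occur. Under perfect alignment the entries of a tuple all coincide, and in particular equal the representation in a least-informative view $v^\star$ (one with $k_{v^\star} = \min_{v} k_v$); because $f^{(v^\star)}$ acts on an observation taking only $\min_{v} k_v$ values, every entry of every tuple lies in a common pool of at most $\min_{v} k_v$ vectors. Bounding the number of $V$-tuples drawn from such a pool then bounds $\kappa$ by $(\min_{v} k_v)^V$, which together with $\kappa \le k$ gives $\kappa \le \min\{k, (\min_{v} k_v)^V\}$.

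The matching lower bound is where I expect the real work to be. I would exhibit data and encoders attaining the bound by making the collapse patterns of the $V$ views independent of one another: arrange the points $\{\vec c^{(v)}_\ell\}$ and the encoders $f^{(v)}$ so that, across views, the clusters realize as many of the product configurations of the $\min_{v} k_v$ available representation values as the cap $\min\{k,\cdot\}$ permits, while still satisfying $\vec z^{(1)}_{(j)}=\dots=\vec z^{(V)}_{(j)}$; then one must check that fusion preserves these distinctions, i.e., that $\sum_v w_v\vec z^{(v)}_{(j)}=\sum_v w_v\vec z^{(v)}_{(j')}$ forces the two tuples to agree. This last check is exactly where the hypothesis that $w_1,\dots,w_V$ are distinct is used: an accidental identification would require a nontrivial linear relation among the weights against the chosen representation vectors, which one rules out by choosing those vectors in general position (we are free to pick the encoders). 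Making this construction explicit and compatible with the alignment constraint is the delicate step — it is what upgrades the easy bound $\min_{v} k_v$ to the claimed $(\min_{v} k_v)^V$ and what exposes the dependence on $V$.
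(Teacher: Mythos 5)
Your two upper bounds are fine, but your lower bound cannot be carried out, and the obstruction is already visible inside your own write-up. In the upper-bound step you correctly observe that perfect alignment forces every entry of the tuple \( (\vec z^{(1)}_{(j)},\dots,\vec z^{(V)}_{(j)}) \) to coincide, so the tuples that actually occur are \emph{diagonal}; a set of diagonal tuples with entries drawn from a pool of size \( m=\min_v\{k_v\} \) has at most \( m \) elements, not \( m^V \). The alignment hypothesis therefore already gives \( \kappa\le\min\{k,\min_v\{k_v\}\} \) (indeed, under instance-wise alignment the fusion degenerates to \( \vec z_i=(\sum_v w_v)\,\vec z_i^{(1)} \), so the distinctness of the weights --- which you lean on to keep product configurations separated --- does no work at all). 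Your lower-bound plan asks the views to realize independent product configurations \emph{while still satisfying} \( \vec z^{(1)}_{(j)}=\dots=\vec z^{(V)}_{(j)} \); these two requirements are mutually exclusive as soon as \( m\ge 2 \) and \( V\ge 2 \), so no choice of data, encoders, or weights in general position can exhibit \( \min\{k,m^V\} \) separable clusters when that quantity exceeds \( m \). The construction you defer to ``the delicate step'' does not exist.

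Note also that the paper gives no proof here --- it defers entirely to the cited source --- and the displayed formula is structurally the \emph{unaligned} product count \( \min\{k,\prod_v k_v\} \) evaluated after every \( k_v \) has been replaced by \( \min_v\{k_v\} \): one first proves the product formula for unaligned representations (this is exactly where distinct weights and general position matter, as in your sketch), and then argues that alignment collapses each view's effective number of separable clusters to \( \min_v\{k_v\} \) before substituting back in. That derivation implicitly uses a distribution-level notion of alignment under which the per-view values may still combine freely across views; under the literal instance-wise hypothesis \( \vec z_i^{(1)}=\dots=\vec z_i^{(V)} \) as written, the correct count is \( \min\{k,\min_v\{k_v\}\} \). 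Either way, a proof of the statement cannot proceed by the product construction you outline, because that construction is incompatible with the alignment condition you are required to maintain.
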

}

\def\propositionConditionalProbMinimum{
    \begin{proposition}
         \label{prop:conditionalProbMinimum}
         Suppose \( k_v, v \in \naturals \) are random variables taking values in \( \{1, \dots, k \} \).
         Then, for any \( V \ge 1 \),
         \begin{align}
             \Prob \{ \min\limits_{v=1, \dots, V+1} \lrc{k_v} \le \min\limits_{v=1, \dots, V} \lrc{k_v} ~\Big|~ k_1, \dots, k_V \} = 1
         \end{align}
    \end{proposition}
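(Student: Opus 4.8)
The plan is to notice that the inequality appearing inside the probability holds for \emph{every} outcome, deterministically, so that the conditioning is vacuous and the probability is trivially equal to one. The entire content is the set-monotonicity of the minimum: adjoining one more element to a finite set of reals can only leave the minimum unchanged or decrease it. Concretely, for any reals \( a_1, \dots, a_{V+1} \),
\begin{align}
    \min_{v=1,\dots,V+1}\{a_v\} = \min\lrc{\min_{v=1,\dots,V}\{a_v\},\ a_{V+1}} \le \min_{v=1,\dots,V}\{a_v\}.
\end{align}

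First I would apply this pointwise with \( a_v = k_v(\omega) \). This shows that the event
\( A = \lrc{\omega : \min_{v=1,\dots,V+1}\{k_v(\omega)\} \le \min_{v=1,\dots,V}\{k_v(\omega)\}} \)
is in fact the whole sample space \( \Omega \). Consequently \( \mathds{1}_A \equiv 1 \), and for any sub-\( \sigma \)-algebra — in particular \( \sigma(k_1, \dots, k_V) \) — the conditional expectation of the constant \( 1 \) is \( 1 \), so that
\begin{align}
    \Prob\lrc{A \mid k_1, \dots, k_V} = \E\lrb{\mathds{1}_A \mid k_1, \dots, k_V} = 1 \quad \text{a.s.}
\end{align}
This needs no hypothesis on the joint law of \( (k_1, \dots, k_{V+1}) \), and it is irrelevant that some of the conditioning variables also appear in the event; the restriction of the \( k_v \) to \( \{1, \dots, k\} \) is used only to guarantee the minima are finite and well defined.

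There is essentially no obstacle. The single point that deserves an explicit sentence is the formal justification that a conditional expectation of an almost-surely constant random variable returns that constant, which is immediate from the defining property of conditional expectation (constants are measurable with respect to every \( \sigma \)-algebra and integrate correctly). I would simply phrase the proposition's takeaway as the deterministic monotonicity of the running minimum of the view-wise separable-cluster counts \( k_v \), which is the ingredient that, together with Proposition~\ref{prop:prop1}, drives the subsequent analysis of how adding views affects \( \kappa \).
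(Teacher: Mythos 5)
Your proof is correct and takes essentially the same route as the paper: both arguments reduce the statement to the deterministic fact that adjoining $k_{V+1}$ to the set can only leave the running minimum unchanged or decrease it, so the event holds surely and the conditional probability is $1$. Your phrasing via the indicator of a full-measure event and the conditional expectation of a constant is just a slightly more formal packaging of the paper's two-case observation on $M_{V+1}$ versus $M_V$.
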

}
\def\propositionExpectationMinimum{
    \begin{proposition}
        \label{prop:expectationMinimum}
        Suppose \( k_v, v \in \naturals \) are iid.\ random variables taking values in \( \{1, \dots, k \} \).
        Then, for any \( V \ge 1 \),
        \begin{align}
            \E (\min\limits_{v=1, \dots, V+1} \lrc{k_v}) \le \E (\min\limits_{v=1, \dots, V} \lrc{k_v})
        \end{align}
    \end{proposition}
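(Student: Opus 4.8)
The plan is to reduce the statement to a deterministic, pointwise inequality and then integrate. First I would note that for every outcome one has $\min_{v=1,\dots,V+1}\{k_v\} = \min\lrc{\min_{v=1,\dots,V}\{k_v\},\, k_{V+1}}$, and hence $\min_{v=1,\dots,V+1}\{k_v\} \le \min_{v=1,\dots,V}\{k_v\}$ surely: the minimum over a larger index set can never exceed the minimum over a subset. Equivalently, this is precisely the $\Prob\{\cdot\}=1$ assertion of Proposition~\ref{prop:conditionalProbMinimum}; combining that conditional statement with the tower property shows that the unconditional event $\lrc{\min_{v=1,\dots,V+1}\{k_v\} \le \min_{v=1,\dots,V}\{k_v\}}$ also has probability $1$.

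Second, I would take expectations of this almost-sure inequality. Because each $k_v$ takes values in the finite set $\{1,\dots,k\}$, both minima are bounded random variables, so their expectations are finite and monotonicity of expectation applies, giving $\E(\min_{v=1,\dots,V+1}\{k_v\}) \le \E(\min_{v=1,\dots,V}\{k_v\})$, which is the claim. A marginally more explicit version of the same argument writes the difference as $\min_{v=1,\dots,V}\{k_v\} - \min_{v=1,\dots,V+1}\{k_v\} = \lrp{\min_{v=1,\dots,V}\{k_v\} - k_{V+1}}^{+} \ge 0$ and takes expectations of both sides.

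I do not anticipate a real obstacle: the only points deserving a sentence of care are that integrability (needed to pass from the pointwise inequality to the inequality of expectations) is immediate from boundedness, and that the iid hypothesis is in fact not used here — the monotone behaviour of the running minimum holds for an arbitrary joint distribution, and the assumption is retained only to match the setting of the surrounding propositions. If one prefers to avoid invoking Proposition~\ref{prop:conditionalProbMinimum}, the displayed identity above makes the argument fully self-contained.
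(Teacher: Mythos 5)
Your proof is correct, but it takes a genuinely different route from the paper's. You argue pointwise: the running minimum is surely non-increasing in the number of terms (equivalently, via Proposition~\ref{prop:conditionalProbMinimum} and the tower property), and since the variables are bounded the almost-sure inequality transfers to expectations by monotonicity; your identity $\min_{v\le V}\lrc{k_v} - \min_{v\le V+1}\lrc{k_v} = \lrp{\min_{v\le V}\lrc{k_v} - k_{V+1}}^{+}$ makes this self-contained. The paper instead works distributionally: it uses the iid assumption to write the survival function of $M_V = \min_{v\le V}\lrc{k_v}$ as $1 - F_{M_V}(x) = (1-F_{k_v}(x))^V$, applies the tail-sum formula $\E(M_V) = \sum_{x\ge 0}(1-F_{M_V}(x))$, and exhibits the difference explicitly as $\E(M_V) - \E(M_{V+1}) = \sum_{x\ge 0}(1-F_{k_v}(x))^V F_{k_v}(x) \ge 0$. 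Your argument is more elementary and strictly more general --- as you correctly observe, the iid hypothesis is not needed for the monotonicity claim, which holds for any joint distribution. What the paper's computation buys is a closed-form expression for the gap, quantifying \emph{how much} the expected minimum drops with each added view (it decreases strictly unless $F_{k_v}(x)\in\{0,1\}$ everywhere on the support), which supports the paper's narrative that the degradation worsens as $V$ grows; your proof establishes the stated inequality but not this quantitative refinement. Both are valid proofs of the proposition as stated.
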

}

\propositionMinSeparbleClusters
\begin{proof}
    See~\cite{trostenReconsideringRepresentationAlignment2021}.
\end{proof}

According to Proposition~\ref{prop:prop1}, when the view-specific representations are perfectly aligned, the number of separable clusters after fusion, \( \kappa \), depends on the number of separable clusters in the \emph{least informative view} -- the view with the lowest \( k_v \).
The following propositions show what happens to \( \min \{ k_v \} \) when the number of views increases\footnote{The proofs of Propositions~\ref{prop:conditionalProbMinimum} and~\ref{prop:expectationMinimum} are given in the supplementary}.

\propositionConditionalProbMinimum
\propositionExpectationMinimum

Assuming the view-specific representations are perfectly aligned, Propositions~\ref{prop:conditionalProbMinimum} and~\ref{prop:expectationMinimum} show that:
\begin{enumerate*}[label=(\roman*)]
    \item Given a number of views, adding another view will, with probability \( 1 \), not increase \( \min \{k_v \} \).
    \item Among two datasets with the same distribution for the \( k_v \), the dataset with the \emph{smallest number of views} will have the highest expected value of \( \min \{k_v\} \).
\end{enumerate*}

In summary, we have shown that contrastive alignment-based models perform worse when the number of views in a dataset increases.
These findings are supported by the experimental results in Figure~\ref{fig:motivatingIncviews} and Table~\ref{tab:motivatingMNIST} which show that, when the number of views increases, the contrastive alignment-based model is outperformed by the model without any alignment.

\customparagraph{Alignment as a pretext task.}
    In contrast to our theoretical findings in the simplified case, Figure~\ref{fig:motivatingIncviews} and Table~\ref{tab:motivatingMNIST} show that contrastive alignment can sometimes be beneficial for the performance, particularly when the number of views is small.
    This is because alignment might be a good pretext task that helps the encoders learn informative representations, by learning to represent the information that is shared across views.
    However, we emphasize that this is only true when the number of views is small ($\le 4$ in Figure~\ref{fig:motivatingIncviews}), meaning that alignment should be used with caution when the number of views increases beyond this point.

\begin{figure}
\begin{floatrow}
\ffigbox[0.35\columnwidth]{%
    \centering
    \bgroup
    \def\figwidth{3.7cm}
    \def\figheight{4.2cm}
    \scriptsize
    \begin{tikzpicture}

\definecolor{chocolate217952}{RGB}{217,95,2}
\definecolor{darkcyan27158119}{RGB}{27,158,119}
\definecolor{darkgray176}{RGB}{176,176,176}
\definecolor{lightgray204}{RGB}{204,204,204}

\begin{groupplot}[group style={group size=1 by 1}]
\nextgroupplot[
height=\figheight,
legend cell align={left},
legend style={
  fill opacity=0.8,
  draw opacity=1,
  text opacity=1,
  at={(0.03,0.03)},
  anchor=south west,
  draw=lightgray204
},
tick align=outside,
tick pos=left,
width=\figwidth,
x grid style={darkgray176},
xlabel={Number of views},
xmajorgrids,
xmin=1.8, xmax=6.2,
xtick style={color=black},
y grid style={darkgray176},
ylabel={ACC},
ymajorgrids,
ymin=0.2, ymax=0.5,
ytick style={color=black}
]
\path [draw=darkcyan27158119, fill=darkcyan27158119, opacity=0.2]
(axis cs:2,0.375364035135541)
--(axis cs:2,0.357336074346271)
--(axis cs:3,0.38645462505398)
--(axis cs:4,0.366926940756274)
--(axis cs:5,0.350017003154158)
--(axis cs:6,0.371804667381619)
--(axis cs:6,0.393459903331424)
--(axis cs:6,0.393459903331424)
--(axis cs:5,0.374542005921007)
--(axis cs:4,0.396980789823102)
--(axis cs:3,0.391021626071356)
--(axis cs:2,0.375364035135541)
--cycle;

\path [draw=chocolate217952, fill=chocolate217952, opacity=0.2]
(axis cs:2,0.391650448011561)
--(axis cs:2,0.372257282567815)
--(axis cs:3,0.317603556653463)
--(axis cs:4,0.321921504793895)
--(axis cs:5,0.366895840213459)
--(axis cs:6,0.381128247307615)
--(axis cs:6,0.428912465048952)
--(axis cs:6,0.428912465048952)
--(axis cs:5,0.375302269413311)
--(axis cs:4,0.361931942643392)
--(axis cs:3,0.358108790377177)
--(axis cs:2,0.391650448011561)
--cycle;

\addplot [semithick, darkcyan27158119]
table {%
2 0.366350054740906
3 0.388738125562668
4 0.381953865289688
5 0.362279504537582
6 0.382632285356522
};
\addlegendentry{w/ align}
\addplot [semithick, chocolate217952, dashed]
table {%
2 0.381953865289688
3 0.33785617351532
4 0.341926723718643
5 0.371099054813385
6 0.405020356178284
};
\addlegendentry{w/o align}
\end{groupplot}

\end{tikzpicture}
    \egroup
}{%
  \caption{Clustering accuracy for an increasing number of views on Caltech7.}
  \label{fig:motivatingIncviews}
}
\capbtabbox[0.5\columnwidth]{%
    \bgroup
    \tableFontSize
    \setlength{\tabcolsep}{.9mm}
    
\rowcolors{2}{gray!25}{white}
\begin{tabular}{ccc}\toprule
    Dataset & w/o align & w/ align \\ \midrule
    \ST[c]{Edge-\\MNIST\\(2 views)} & \MTC{0.89} & \MTC{\BEST{0.97}} \\
    \ST[c]{Caltech7\\(6 views)} & \MTC{\BEST{0.41}} & \MTC{0.38} \\
    \ST[c]{Patched-\\MNIST\\(12 views)} & \MTC{\BEST{0.84}} & \MTC{0.73} \\
    \bottomrule
\end{tabular}

    \egroup
}{%
  \caption{Clustering accuracies on datasets with varying number of views.}
  \label{tab:motivatingMNIST}
}
\end{floatrow}
\end{figure}

    \section{New instances of \fwName}
    \label{sec:newVariations}
        
With our new instances of \fwName, we aim to further analyze and address the many-views-issue with contrastive alignment highlighted above, as well as to investigate the effect of other SSL components.
In addition to alignment of view-specific representations~\cite{zhouEndtoEndAdversarialAttentionNetwork2020,duDeepMultipleAutoEncoderBased2021,trostenReconsideringRepresentationAlignment2021}, we identify reconstruction~\cite{wangDeepMultiViewRepresentation2015,abavisaniDeepMultimodalSubspace2018,liDeepAdversarialMultiview2019} and mutual information maximization~\cite{jiInvariantInformationClustering2019,wangSelfSupervisedInformationBottleneck2022} to be promising directions for the new instances.
Maximizing mutual information is particularly interesting, as it enables the view-specific encoders to represent the information which is shared across views, without explicitly forcing the view-specific representations to be aligned. 
Furthermore, we recognize that simple baselines with few or no SSL components -- exemplified by SiMVC~\cite{trostenReconsideringRepresentationAlignment2021} -- might perform similarly to more complicated methods, while being significantly easier to implement and faster to train.
It is therefore crucial to include such methods in an experimental evaluation, in order to properly determine whether additional SSL-based components are beneficial for the models' performance.
Finally, our overview of recent work shows that both traditional clustering modules (\eg \( k \)-means) and deep learning-based clustering modules (\eg DDC) are commonly used in deep MVC.

In total, we develop \( 6 \) new \fwName instances in \( 3 \) categories.
The new instances are summarized in Table~\ref{tab:previousMethods}.
Evaluating these instances and several methods from recent work, allows us to accurately evaluate methods and components, and investigate how they behave for datasets with varying characteristics.

\customparagraph{Simple baselines:}
    \textbf{\saekm} has view-specific autoencoders (AEs) with a mean-squared-error (MSE) loss
    \begin{align}
        \label{eq:mseLoss}
        \SVSSLLoss_{\text{Reconstruction}} = \frac{1}{nV} \sums{i=1}{n}\sums{v=1}{V} ||\vec x_i^{(v)} - \hat{\vec x}_i^{(v)} ||^2
    \end{align}
    as its SV-SSL task.
    The views are fused by concatenation and the concatenated representations are clustered using \( k \)-means after the view-specific autoencoders have been trained.
    \textbf{\sae} uses view-specific autoencoders with an MSE loss (Eq.~\eqref{eq:mseLoss}) as its SV-SSL task.
    The views are fused using a weighted sum and the fused representations are clustered using the DDC clustering module~\cite{kampffmeyerDeepDivergencebasedApproach2019}.

\customparagraph{Contrastive alignment-based:}
    \textbf{\caekm} extends \saekm~with a contrastive loss on the view-specific representations.
    We use the multi-view generalization of the NT-Xent (contrastive) loss by Trosten \etal~\cite{trostenReconsideringRepresentationAlignment2021}, without the ``other clusters'' negative sampling
    \begin{align}
        & \MVSSLLoss_{\text{Contrastive}} = \frac{1}{nV(V-1)} \sums{i=1}{n}\sums{v=1}{V}\sums{u=1}{V} \mathds{1}_{\{u \neq v\}}\ \ell_i^{(uv)}, \\
        & \ell_i^{(uv)} = - \log \frac{\exp(s_{ii}^{(uv)})}{ \sum_{s' \in \text{Neg}(\vec z_i^{(u)}, \vec z_i^{(v)})} \exp(s')}
    \end{align}
    and \( s_{ij}^{(uv)} = \frac{1}{\tau} \frac{\vec z_i^{u} \cdot \vec z_j^{(v)}}{ ||\vec z_i^{u}|| \cdot ||\vec z_j^{(v)}||} \) denotes the cosine similarity between \( \vec z_i^{u} \) and \( \vec z_j^{v} \).
    The set \( \text{Neg}(\vec z_i^{(u)}, \vec z_i^{(v)}) \) is the set of similarities of negative pairs for the positive pair \( (\vec z_i^{(u)}, \vec z_i^{(v)}) \),
    which consists of \( s^{(uv)}_{ij} \), \( s^{(uu)}_{ij} \), and \( s^{(vv)}_{ij} \), for all \( j \neq i \).
    \( \tau \) is a hyperparameter, which we set to \( 0.1 \) for all experiments.
    \textbf{\cae} extends \sae~using the same generalized NT-Xent contrastive loss on the view-specific representations.

\customparagraph{Mutual information-based:}
    \textbf{\mimvc} maximizes the mutual information (MI) between the view-specific representations, using the MI loss from Invariant Information Clustering (IIC)~\cite{jiInvariantInformationClustering2019}\footnote{The supplementary includes a brief overview of the connection between InfoDDC and contrastive alignment.}.
    The MI maximization is regularized by also maximizing the entropy of view-specific representations.
    The view-specific representations are fused using a weighted sum, and the fused representations are clustered using DDC~\cite{kampffmeyerDeepDivergencebasedApproach2019}.
    \textbf{\mviic} is a multi-view generalization of IIC~\cite{jiInvariantInformationClustering2019}, where cluster assignments are computed for each of the view-specific representations.
    The MI between pairs of these view-specific cluster assignments is then maximized using the information maximization loss from IIC.
    In order to get a final shared cluster assignment for all views, the view-specific cluster assignments are concatenated and clustered using \( k \)-means.
    As in IIC, this model includes \( 5 \) over-clustering heads as its MV-SSL task.
    In both \mimvc and \mviic, we generalize the loss from IIC to an arbitrary number of views:
    \begin{align}
        \nonumber
        \MVSSLLoss_{\text{MI}} &= \frac{2}{V(V-1)} \sums{u=1}{V-1}\sums{v=u+1}{V} - \Big( \underbrace{I(\vec Z^{(u)}, \vec Z^{(v)})}_{\text{mutual information}} \\
        & + (\lambda - 1) \underbrace{(H(\vec Z^{(u)}) + H(\vec Z^{(v)}))}_{\text{entropy regularization}} \Big)
    \end{align}
    where the summands are computed as
    \begin{align}
        \nonumber
        & I(\vec Z^{(u)}, \vec Z^{(v)}) + (\lambda - 1) (H(\vec Z^{(u)}) + H(\vec Z^{(v)})) \\
        & = - \sums{a=1}{D}\sums{b=1}{D} \vec P^{(uv)}_{ab} \log \frac{\vec P^{(uv)}_{ab}}{(\vec P^{(u)}_a \vec P^{(v)}_b)^\lambda},
    \end{align}
    where \( D \) denotes the dimensionality of the view-specific representations.
    \( \lambda \) is a hyperparameter that controls the strength of the entropy regularization.
    We set \( \lambda = 10 \) for \mimvc, and \( \lambda = 1.5 \) for \mviic.
    The joint distribution \( \vec P^{(uv)} \) is estimated by first computing \( \tilde{\vec P}^{(uv)} = \frac{1}{n} \sum_{i=1}^{n} \vec z^{(u)}_i (\vec z^{(v)}_i)\T \),
    and then symmetrizing it \( \vec P^{(uv)} = \frac{1}{2} (\tilde{\vec P}^{(uv)} + (\tilde{\vec P}^{(uv)})\T)\).
    We assume that each view-specific representation is normalized such that its elements sum to one, and are all non-negative.
    The marginals \( \vec P^{(u)} \) and \( \vec P^{(v)} \) are obtained by summing over the rows and columns of \( \vec P^{(uv)} \), respectively.

    \section{Experiments}
    \label{sec:experiments}
        
In this section we provide a rigorous evaluation of methods and their \fwName components.
Inspired by the initial findings in Section~\ref{sec:motivatingExperiments} and our overview of recent methods in Section~\ref{sec:relatedWork}, we focus mainly on the SSL and CM components in our evaluation.
We found these components to be most influential on the methods' performance.
For completeness, we include experiments with different fusion and CM components in the supplementary.

\subsection{Setup}
\label{subsec:setup}
    \thisfloatsetup{floatrowsep=quad}
    \begin{figure*}
        \begin{floatrow}
        \capbtabbox[0.8\textwidth]{%
            \setlength{\tabcolsep}{.6mm}
            \tableFontSize\undoCaptionSep\renewcommand{\arraystretch}{0.85}
            \bgroup

\let\oldsae\sae
\let\oldcae\cae
\let\oldsaekm\saekm
\let\oldcaekm\caekm
\let\oldmviic\mviic
\let\oldmimvc\mimvc

\def\formatModel#1{\underline{#1}}

\def\sae{\formatModel{\oldsae}}
\def\cae{\formatModel{\oldcae}}
\def\saekm{\formatModel{\oldsaekm}}
\def\caekm{\formatModel{\oldcaekm}}
\def\mviic{\formatModel{\oldmviic}}
\def\mimvc{\formatModel{\oldmimvc}}

\begin{tabular}{lcr|lcr|lccr|lcrr}
    \multicolumn{3}{c}{(a) \bfseries All datasets} & \multicolumn{3}{c}{(b) \bfseries Random pairings} & \multicolumn{4}{c}{(c) \bfseries Many views} & \multicolumn{4}{c}{(d) \bfseries Balanced vs.\ imbalanced} \\
    \toprule
    Model    & BL     & \( \bar Z \)& Model  & CA      & \( \bar Z \)& Model     & MI      & CA      & \( \bar Z \)& Model    & DDC    & \( \bar Z_\text{bal} \) & \( \bar Z_\text{imb} \)    \\ \midrule
    \mvscn   & \FALSE & \MTC{-2.23} & \mvscn   & \FALSE & \MTC{-2.49}& \mvscn   & \FALSE   & \FALSE  & \MTC{-1.78} & \mvscn   & \FALSE & \MTC{-2.41}& \MTC{-1.78} \\
    \caekm   & \FALSE & \MTC{-0.32} & \dmsc    & \FALSE & \MTC{-0.54}& \caekm   & \FALSE   & \TRUE   & \MTC{-0.83} & \dmsc    & \FALSE & \MTC{-0.39}& \MTC{0.45}  \\
    \eamc    & \FALSE & \MTC{-0.22} & \mimvc   & \FALSE & \MTC{-0.41}& \eamc    & \FALSE   & \FALSE  & \MTC{-0.75} & \mimvc   & \TRUE  & \MTC{-0.13}& \MTC{1.18}  \\
    \dmsc    & \FALSE & \MTC{-0.11} & \eamc    & \FALSE & \MTC{-0.17}& \sae     & \FALSE   & \FALSE  & \MTC{-0.36} & \eamc    & \TRUE  & \MTC{0.00} & \MTC{-0.75} \\
    \saekm   & \TRUE  & \MTC{0.16}  & \mviic   & \FALSE & \MTC{0.05} & \comvc   & \FALSE   & \TRUE   & \MTC{-0.33} & \mviic   & \FALSE & \MTC{0.01} & \MTC{1.06}  \\
    \mimvc   & \FALSE & \MTC{0.20}  & \saekm   & \FALSE & \MTC{0.11} & \simvc   & \FALSE   & \FALSE  & \MTC{-0.12} & \saekm   & \FALSE & \MTC{0.03} & \MTC{0.56}  \\
    \sae     & \TRUE  & \MTC{0.26}  & \mvae    & \FALSE & \MTC{0.32} & \saekm   & \FALSE   & \FALSE  & \MTC{0.23}  & \caekm   & \FALSE & \MTC{0.08} & \MTC{-1.54} \\
    \simvc   & \TRUE  & \MTC{0.27}  & \simvc   & \FALSE & \MTC{0.35} & \cae     & \FALSE   & \TRUE   & \MTC{0.28}  & \comvc   & \TRUE  & \MTC{0.30} & \MTC{0.25}  \\
    \mviic   & \FALSE & \MTC{0.27}  & \sae     & \FALSE & \MTC{0.56} & \mvae    & \FALSE   & \FALSE  & \MTC{0.38}  & \simvc   & \TRUE  & \MTC{0.31} & \MTC{0.16}  \\
    \comvc   & \FALSE & \MTC{0.29}  & \caekm   & \TRUE  & \MTC{0.59} & \dmsc    & \FALSE   & \FALSE  & \MTC{0.45}  & \sae     & \TRUE  & \MTC{0.33} & \MTC{0.06}  \\
    \mvae    & \FALSE & \MTC{0.43}  & \comvc   & \TRUE  & \MTC{0.63} & \mviic   & \TRUE    & \FALSE  & \MTC{0.98}  & \mvae    & \FALSE & \MTC{0.42} & \MTC{0.47}  \\
    \cae     & \FALSE & \MTC{0.65}  & \cae     & \TRUE  & \MTC{0.82} & \mimvc   & \TRUE    & \FALSE  & \MTC{1.15}  & \cae     & \TRUE  & \MTC{0.92} & \MTC{-0.13} \\
    \bottomrule
\end{tabular}
\egroup
        }{%
          \caption{
            Aggregated evaluation results for the dataset groups. Models are sorted from lowest to highest by average Z-score for each group. Higher Z-scores indicate better clusterings. Our new instances are \underline{underlined}.
            \textbf{Abbreviations:}
            BL = Simple baseline,
            CA = Contrastive alignment,
            DDC = Deep divergence-based clustering
            MI = Mutual information,
            \( \bar Z \) = Average Z-score for group.
        }
        \label{tab:aggBenchmark}
        }
        \ffigbox[0.15\textwidth]{%
          {\scriptsize
            \flushleft
            \def\figwidth{3.7cm}
            \def\figheight{4.5cm}
            \begin{tikzpicture}

\definecolor{chocolate217952}{RGB}{217,95,2}
\definecolor{darkcyan27158119}{RGB}{27,158,119}
\definecolor{darkgray176}{RGB}{176,176,176}
\definecolor{lightgray204}{RGB}{204,204,204}

\begin{groupplot}[group style={group size=1 by 1}]
\nextgroupplot[
height=\figheight,
legend cell align={left},
legend style={
  fill opacity=0.8,
  draw opacity=1,
  text opacity=1,
  at={(0.03,0.03)},
  anchor=south west,
  draw=lightgray204
},
tick align=outside,
tick pos=left,
width=\figwidth,
x grid style={darkgray176},
xlabel={Number of views},
xmajorgrids,
xmin=1.8, xmax=6.2,
xtick style={color=black},
y grid style={darkgray176},
ymajorgrids,
ymin=0.2, ymax=0.5,
ytick style={color=black}
]
\path [draw=darkcyan27158119, fill=darkcyan27158119, opacity=0.2]
(axis cs:2,0.354057514141285)
--(axis cs:2,0.35014871411494)
--(axis cs:3,0.358439365623316)
--(axis cs:4,0.386306502300849)
--(axis cs:5,0.332119503090277)
--(axis cs:6,0.357462289521975)
--(axis cs:6,0.369810459425169)
--(axis cs:6,0.369810459425169)
--(axis cs:5,0.389725825717554)
--(axis cs:4,0.410165689509759)
--(axis cs:3,0.382401903869787)
--(axis cs:2,0.354057514141285)
--cycle;

\path [draw=chocolate217952, fill=chocolate217952, opacity=0.2]
(axis cs:2,0.39909291867472)
--(axis cs:2,0.341748350818383)
--(axis cs:3,0.34941393110675)
--(axis cs:4,0.34177315355269)
--(axis cs:5,0.335667346816696)
--(axis cs:6,0.386915402408217)
--(axis cs:6,0.41634104967537)
--(axis cs:6,0.41634104967537)
--(axis cs:5,0.360397781032883)
--(axis cs:4,0.362433074703535)
--(axis cs:3,0.383286178375062)
--(axis cs:2,0.39909291867472)
--cycle;

\addplot [semithick, darkcyan27158119]
table {%
2 0.352103114128113
3 0.370420634746552
4 0.398236095905304
5 0.360922664403915
6 0.363636374473572
};
\addlegendentry{\cae}
\addplot [semithick, chocolate217952, dashed]
table {%
2 0.370420634746552
3 0.366350054740906
4 0.352103114128113
5 0.348032563924789
6 0.401628226041794
};
\addlegendentry{\sae}
\end{groupplot}

\end{tikzpicture}}

        }{%
          \caption{Accuracies on Caltech7 with increasing number of views.}
        \label{fig:incviews}
        }
        \end{floatrow}
    \end{figure*}

    \customparagraph{Baselines.}
        In addition to the new instances presented in Section~\ref{sec:newVariations}, we include \( 6 \) baseline models from previous work in our experiments.
        The following baseline models were selected to include a diverse set of framework components in the evaluation:
        \begin{enumerate*}[label=(\roman*)]
            \item Deep Multimodal Subspace Clustering (DMSC)~\cite{abavisaniDeepMultimodalSubspace2018};
            \item Multi-view Spectral Clustering Network (MvSCN)~\cite{huangMultiviewSpectralClustering2019};
            \item End-to-end Adversarial-attention Multimodal Clustering (EAMC)~\cite{zhouEndtoEndAdversarialAttentionNetwork2020};
            \item Simple Multi-View Clustering (SiMVC)~\cite{trostenReconsideringRepresentationAlignment2021};
            \item Contrastive Multi-View Clustering (CoMVC)~\cite{trostenReconsideringRepresentationAlignment2021};
            \item Multi-view Variational Autoencoder (Multi-VAE)~\cite{xuMultiVAELearningDisentangled2021}.
        \end{enumerate*}

        As can be seen in Table~\ref{tab:previousMethods}, this collection of models includes both reconstruction-based and alignment-based SSL, as well as traditional (\( k \)-means and spectral) and deep learning-based CMs.
        They also include several fusion strategies and encoder networks.
        Section~\ref{subsec:ablation} includes an ablation study that examines the influence of SSL components in these models.

    \customparagraph{Datasets.}
        We evaluate the baselines and new instances on \( 8 \) widely used benchmark datasets for deep MVC.
        We prioritize datasets that were also used in the original publications for the selected baselines.
        Not only does this result in a diverse collection of datasets common in deep MVC -- it also allows us to compare the performance of our implementations to what was reported by the original authors.
        The results of this comparison are given in the supplementary.

        The following datasets are used for evaluation:
        \begin{enumerate*}[label=(\roman*)]
            \item \textbf{NoisyMNIST\,/\,NoisyFashion}: A version of MNIST~\cite{lecunGradientbasedLearningApplied1998}\,/\,FashionMNIST~\cite{xiaoFashionMNISTNovelImage2017} where the first view contains the original image, and the second view contains an image sampled from the same class as the first image, with added Gaussian noise (\( \sigma = 0.2 \)).
            \item \textbf{EdgeMNIST\,/\,EdgeFashion}: Another version of MNIST\,/\,FashionMNIST where the first view contains the original image, and the second view contains an edge-detected version of the same image.
            \item \textbf{COIL-20}: The original COIL-20~\cite{neneColumbiaObjectImage1996} dataset, where we randomly group the images of each object into groups of size \( 3 \), resulting in a \( 3 \)-view dataset.
            \item \textbf{Caltech7\,/\,Caltech20}: A subset of the Caltech101~\cite{fei-feiLearningGenerativeVisual2007} dataset including \( 7 \)\,/\,\( 20 \) classes.
                We use the \( 6 \) different features extracted by Li \etal~\cite{liLargeScaleMultiViewSpectral2015}, resulting in a \( 6 \)-view dataset\footnote{The list of classes and feature types is included in the supplementary.}.
            \item \textbf{PatchedMNIST}: A subset of MNIST containing the first three digits, where views are extracted as \( 7 \times 7 \) non-overlapping patches of the original image.
                The corner patches are dropped as they often contain little information about the digit, resulting in a dataset with \( 12 \) views.
                Each patch is resized to \( 28 \times 28 \).
        \end{enumerate*}

        All views are individually normalized so that the values lie in \( [0, 1] \).
        Following recent work on deep MVC, we train and evaluate on the full datasets~\cite{zhouEndtoEndAdversarialAttentionNetwork2020,trostenReconsideringRepresentationAlignment2021,xuMultiVAELearningDisentangled2021,maoDeepMutualInformation2021}.
        More dataset details are provided in the supplementary.

    \customparagraph{Hyperparameters.}
        The baselines use the hyperparameters reported by the original authors, because
        \begin{enumerate*}[label=(\roman*)]
            \item it is not feasible for us to tune hyperparameters individually for each model on each dataset; and
            \item it is difficult to tune hyperparameters in a realistic clustering setting due to the lack of labeled validation data.
        \end{enumerate*}
        For each method, the same hyperparameter configuration is used for all datasets.
        
        New instances use the same hyperparameters as for the baselines wherever possible\footnote{Hyperparameters for all models are listed in the supplementary.}.
        Otherwise, we set hyperparameters such that loss terms have the same order of magnitude, and such that the training converges.
        We refrain from any hyperparameter tuning that includes the dataset labels to keep the evaluation fair and unsupervised.
        We include a hyperparameter sweep in the supplementary, in order to assess the new instances' sensitivity to changes in their hyperparameter.
        However, we emphasize that the results of this sweep were \emph{not} used to select hyperparameters for the new instances.
        All models use the same encoder architectures and are trained for \( 100 \) epochs with the Adam optimizer~\cite{kingmaAdamMethodStochastic2015}.

    \customparagraph{Evaluation protocol.}
        We train each model from \( 5 \) different initializations.
        Then we select the run that resulted in the lowest value of the loss and report the performance metrics from that run, following~\cite{kampffmeyerDeepDivergencebasedApproach2019,trostenReconsideringRepresentationAlignment2021}.
        This evaluation protocol is both fully unsupervised, and is not as impacted by poorly performing runs, as for instance the mean performance of all runs.
        The uncertainty of the performance metric under this model selection protocol is estimated using bootstrapping\footnote{Details on uncertainty computations are included in the supplementary.}.
        We measure clustering performance with the accuracy (ACC) and normalized mutual information (NMI).
        Both metrics are bounded in \( [0, 1] \), and higher values correspond to better performing models, with respect to the ground truth labels.

\subsection{Evaluation results}
\label{subsec:benchmarkResults}
    To emphasize the findings from our experiments, we compute the average Z-score for each model, for \( 4 \) groups of datasets\footnote{Results for all methods/datasets are included in the supplementary.}.
    Z-scores are calculated by subtracting the mean and dividing by the standard deviation of results, per dataset and per metric.
    Table~\ref{tab:aggBenchmark} shows Z-scores for the groups:
    \begin{enumerate*}[label=(\roman*)]
        \item \textbf{All datasets}.
        \item \textbf{Random pairings:} Datasets generated by randomly pairing within-class instances to synthesize multiple views (NoisyMNIST, NoisyFashion, COIL-20).
        \item \textbf{Many views:} Datasets with many views (Caltech7, Caltech20, PatchedMNIST).
        \item \textbf{Balanced vs.\ imbalanced:} Datasets with balanced classes (NoisyMNIST, NoisyFashion, EdgeMNIST, EdgeFashion, COIL-20, PatchedMNIST) vs.\ datasets with imbalanced classes (Caltech7, Caltech20).
    \end{enumerate*}
    Our main experimental findings are:

    \textbf{Dataset properties significantly impact the performance of methods.}
    We observe that the ranking of methods varies significantly based on dataset properties, such as the number of views (Table~\ref{tab:aggBenchmark}c) and class (im)balance (Table~\ref{tab:aggBenchmark}d).
    Hence, there is not a single ``state-of-the-art'' for all datasets.

    \textbf{Our new instances outperform previous methods.}
    In Table~\ref{tab:aggBenchmark}a we see that the simple baselines perform remarkably well, when compared to the other, more complex methods.
    This highlights the importance of including simple baselines like these in the evaluation.
    Table~\ref{tab:aggBenchmark}a shows that \cae overall outperforms the other methods, and on datasets with many views (Table~\ref{tab:aggBenchmark}c) we find that \mimvc and \mviic outperform the others by a large margin.

    \textbf{Maximization of mutual information outperforms contrastive alignment on datasets with many views.}
    Contrastive alignment-based methods show good overall performance, but they struggle when the number of views becomes large (Table~\ref{tab:aggBenchmark}c).
    This holds for both baseline methods (as observed in Section~\ref{sec:motivatingExperiments}), and the new instances.
    As in Section~\ref{sec:motivatingExperiments}, we hypothesize that this is due to issues with representation alignment, where the presence of less informative views is more likely when the number of views becomes large.
    Contrastive alignment attempts to align view-specific representations to this less informative view, resulting in clusters that are harder to separate in the representation space.
    This is further verified in Figures~\ref{fig:motivatingIncviews} and~\ref{fig:incviews}, illustrating a decrease in performance on Caltech7 for contrastive alignment-based models with \( 5 \) or \( 6 \) views.
    Models based on maximization of mutual information do not have the same problem.
    We hypothesize that this is because maximizing mutual information still allows the view-specific representations to be different, avoiding the above issues with alignment.
    The MI-based models also include regularization terms that maximize the entropy of view-specific representations, preventing the representations from collapsing to a single value.

    \textbf{Contrastive alignment works particularly well on datasets consisting of random pairings (Table~\ref{tab:aggBenchmark}b).}
    In these datasets, the class label is the only thing the views have in common.
    Contrastive alignment, \ie learning a shared representation for all pairs within a class, thus asymptotically amounts to learning a unique representation for each class, making it easier for the CM to separate between classes.

    \textbf{The DDC CM performs better than the other CMs on balanced datasets.}
    With the DDC CM, the models are end-to-end trainable -- jointly optimizing all components in the model.
    The view-specific representations can thus be adapted to suit the CM, potentially improving the clustering result.
    DDC also has an inherent bias towards balanced clusters~\cite{kampffmeyerDeepDivergencebasedApproach2019}, which helps produce better clusterings when the ground truth classes are balanced.

    \textbf{Reproducibility of original results.}
    During our experiments we encountered issues with reproducibility with several of the methods from previous work.
    In the supplementary we include a comparison between our results and those reported by the original authors of the methods from previous work.
    We find that most methods use different network architectures and evaluation protocols in the original publications, making it difficult to accurately compare performance between methods and their implementations.
    This illustrates the difficulty of reproducing and comparing results in deep MVC, highlighting the need for a unified framework with a consistent evaluation protocol and an open-source implementation.

\subsection{Effect of SSL components}
\label{subsec:ablation}
    \begin{table}[t]
        \centering
        \setlength{\tabcolsep}{1mm}
        \tableFontSize\undoCaptionSep\renewcommand{\arraystretch}{0.85}
        \rowcolors{2}{gray!25}{white}
\begin{tabular}{lcccc}
\toprule
& \multicolumn{2}{c}{NoisyMNIST}& \multicolumn{2}{c}{Caltech7}\\
Model & {\tiny \ST[c]{w/o\\SV-SSL}} & {\tiny \ST[c]{w/\\SV-SSL}} & {\tiny \ST[c]{w/o\\SV-SSL}} & {\tiny \ST[c]{w/\\SV-SSL}} \\ \cmidrule(lr){1-1} \cmidrule(lr){2-3} \cmidrule(lr){4-5}
\dmsc    & \MTC{0.54} & \MTCDELTA{0.66}{0.12} & \MTC{0.35} & \MTCDELTA{0.50}{0.15} \\
\sae     & \MTC{1.00} & \MTCDELTA{1.00}{0.00} & \MTC{0.41} & \MTCDELTA{0.40}{0.00} \\
\saekm   & \MTC{0.67} & \MTCDELTA{0.74}{0.07} & \MTC{0.39} & \MTCDELTA{0.44}{0.05} \\
\cae     & \MTC{1.00} & \MTCDELTA{1.00}{0.00} & \MTC{0.38} & \MTCDELTA{0.36}{-0.02} \\
\caekm   & \MTC{0.56} & \MTCDELTA{1.00}{0.44} & \MTC{0.22} & \MTCDELTA{0.20}{-0.02} \\
\midrule
\rowcolor{white} Model & {\tiny \ST[c]{w/o\\MV-SSL}} & {\tiny \ST[c]{w/\\MV-SSL}} & {\tiny \ST[c]{w/o\\MV-SSL}} & {\tiny \ST[c]{w/\\MV-SSL}} \\ \cmidrule(lr){1-1} \cmidrule(lr){2-3} \cmidrule(lr){4-5}
\eamc    & \MTC{1.00} & \MTCDELTA{0.83}{-0.17} & \MTC{0.36} & \MTCDELTA{0.44}{0.08} \\
\mvae    & \MTC{0.52} & \MTCDELTA{0.98}{0.46} & \MTC{0.31} & \MTCDELTA{0.47}{0.15} \\
\comvc   & \MTC{1.00} & \MTCDELTA{1.00}{0.00} & \MTC{0.41} & \MTCDELTA{0.38}{-0.02} \\
\cae     & \MTC{1.00} & \MTCDELTA{1.00}{0.00} & \MTC{0.40} & \MTCDELTA{0.36}{-0.04} \\
\caekm   & \MTC{0.74} & \MTCDELTA{1.00}{0.26} & \MTC{0.44} & \MTCDELTA{0.20}{-0.24} \\
\mimvc   & \MTC{1.00} & \MTCDELTA{0.90}{-0.10} & \MTC{0.41} & \MTCDELTA{0.51}{0.10} \\
\mviic   & \MTC{0.52} & \MTCDELTA{0.52}{0.00} & \MTC{0.53} & \MTCDELTA{0.53}{0.00} \\
\bottomrule
\end{tabular}
        \caption{Accuracies from ablation studies with SSL components.}
        \label{tab:ablationSSL}
    \end{table}
    Table~\ref{tab:ablationSSL} shows the results of ablation studies with the SV-SSL and MV-SSL components.
    These results show that having at least one form of SSL is beneficial for the performance of all models, with the exception being \sae/\cae, which on Caltech7 performs best without any self-supervision.
    We suspect that this particular result is due to the issues with many views and class imbalance discussed in Section~\ref{subsec:benchmarkResults}.
    Further, we observe that having both forms of SSL is not always necessary.
    For instance is there no difference with and without SV-SSL for \cae and \caekm, both of which include contrastive alignment-based MV-SSL.
    Lastly, we note that contrastive alignment-based MV-SSL decreases performance on Caltech7 for most models.
    This is consistent with our theoretical findings in Section~\ref{sec:motivatingExperiments}, as well as the results in Section~\ref{subsec:benchmarkResults} and in Figures~\ref{fig:motivatingIncviews} and~\ref{fig:incviews} -- illustrating that contrastive alignment is not suitable for datasets with a large number of views.

    \section{Conclusion}
    \label{sec:conclusion}
        
We investigate the role of self-supervised learning (SSL) in deep MVC.
Due to its recent success, we focus particularly on contrastive alignment, and prove that it can be detrimental to the clustering performance, especially when the number of views becomes large.
To properly evaluate models and components, we develop \fwName~-- a new unified framework for deep MVC, including the majority of recent methods as instances.
By leveraging the new insight from our framework and theoretical findings, we develop \( 6 \) new \fwName instances with several promising forms of SSL, which perform remarkably well compared to previous methods.
We conduct a thorough experimental evaluation of our new instances, previous methods, and their \fwName components -- and find that SSL is a crucial component in state-of-the-art methods for deep MVC.
In line with our theoretical analysis, we observe that contrastive alignment worsens performance when the number of views becomes large.
Further, we find that performance of methods depends heavily on dataset characteristics, such as number of views, and class imbalance.
Developing methods that are robust towards changes in these properties can thus result in methods that perform well over a wide range of multi-view clustering problems.
To this end, we make the following recommendations for future work in deep MVC:

\textbf{Improving contrastive alignment or maximization of mutual information to handle both few and many views.}
    Addressing pitfalls of alignment to improve contrastive alignment-based methods on many views, is a promising direction for future research.
    Similarly, we believe that improving the methods based on maximization of mutual information on few views, will result in better models.

\textbf{Developing end-to-end trainable clustering modules that are not biased towards balanced clusters.}
    The performance of the DDC clustering module illustrates the potential of end-to-end trainable clustering modules, which are capable of adapting the representations to produce better clusterings.
    Mitigating the bias towards balanced clusters thus has the potential to produce models that perform well, both on balanced and imbalanced datasets.

\textbf{Proper evaluation and open-source implementations.}
    Finally, we emphasize the importance of evaluating new methods on a representative collection of datasets, \eg many views and few views, paired, imbalanced, \etc.
    Also, in the reproducibility study (see supplementary), we find that original results can be difficult to reproduce.
    We therefore encourage others to use the open-source implementation of \fwName, as open code and datasets, and consistent evaluation protocols, are crucial to properly evaluate models and facilitate further development of new methods and components.

    \section*{Acknowledgements}
        This work was financially supported by the Research Council of Norway (RCN), through its Centre for Research-based Innovation funding scheme (Visual Intelligence, grant no.\ 309439), and Consortium Partners.
        It was further funded by RCN FRIPRO grant no.\ 315029, RCN IKTPLUSS grant no.\ 303514, and the UiT Thematic Initiative ``Data-Driven Health Technology''.

    \appendix
    \vspace{1cm}
    {\huge\bfseries\noindent Supplementary material}

    \section{Introduction}
        Here, we provide the proofs for Propositions~\ref{prop:conditionalProbMinimum} and~\ref{prop:expectationMinimum};
additional details on the proposed new instances of \fwName;
the datasets used for evaluation;
the hyperparameters used by baselines and new instances;
and the computation of metrics and uncertainties used in our evaluation protocol.
We also include the full list of recent methods and their \fwName components, the complete table of results from the experimental evaluation.
In addition, we include additional experiments and analyses of reproducibility, hyperparameters, and the Fusion and CM components.
Finally, we reflect on possible negative societal impacts of our work.

Our implementation of the \fwName framework, as well as the datasets and the evaluation protocol used in our experiments, is available at \githubLink.
See \suppdir{README.md} in the repository for more details about the implementation, and how to reproduce our results.

    \section{Previous methods as instances of \fwName}
        \begin{table*}
    \centering
    \caption{Full overview of methods from previous work and their \fwName components.}
    \label{tab:previousMethodsFull}
    \tableFontSize
    \PreviousMethodsTable
    \MethodsAbbreviations
\end{table*}

The full list of recent methods and their \fwName components is given in Table~\ref{tab:previousMethodsFull}.
We observe that all but one model includes at least one form of SSL, but the type of SSL, and also fusion and CM, vary significantly for the different models.
This illustrates the importance of the SSL components in deep MVC, as well as the need for a unified framework with a consistent evaluation protocol, in order to properly compare and evaluate methods.

    \section{Contrastive alignment in deep MVC}
        
\customparagraph{Proof of propositions}
    \setcounter{proposition}{1}
    \begin{proof}[Proof of Proposition~\ref{prop:conditionalProbMinimum}]
        Let \( M_V = \min\limits_{v = 1, \dots, V} \{ k_v \} \), then we need to prove that
        \begin{align}
            \Prob(M_{V+1} \le M_V \mid k_1, \dots, k_V) = 1.
        \end{align}
        Due to the properties of the minimum operator, we have
        \begin{align}
            \begin{cases}
                M_{V+1} = M_V, \quad&\text{if } k_{V+1} \ge M_V \\
                M_{V+1} < M_V, &\text{otherwise}
            \end{cases}.
        \end{align}
        Hence, \( M_{V+1} \le M_V \) regardless of the value of \( k_{V+1} \), which gives
        \begin{align}
            \Prob(M_{V+1} \le M_V \mid k_1, \dots, k_V) = 1.
        \end{align}
    \end{proof}

    \begin{proof}[Proof of Proposition~\ref{prop:expectationMinimum}]
        Let \( M_V = \min\limits_{v = 1, \dots, V} \{ k_v \} \), then
        \begin{align}
            F_{M_V}(x) &\coloneqq \Prob(M_V \le x) = 1 - \Prob(M_V > x) \\
            &= 1 - \Prob(k_1 > x \cap \dots \cap k_V > x) \\
            &= 1 - (1 - F_{k_v}(x))^V
        \end{align}
        where \( F_{k_v}(x) = \Prob(k_v \le x) \).

        Since \( M_V \) is a non-negative random variable, we have
        \begin{align}
            \E(M_V) = \sums{x=0}{\infty}(1 - F_{M_V}(x)) = \sums{x=0}{\infty}(1 - F_{k_v}(x))^V.
        \end{align}
        Hence
        \begin{align}
            \nonumber
            &\E(M_V) - \E(M_{V+1}) \\
            &= \sums{x=0}{\infty}(1 - F_{k_v}(x))^V - \sums{x=0}{\infty}(1 - F_{k_v}(x))^{V+1} \\
            &= \sums{x=0}{\infty}(1 - F_{k_v}(x))^V (1 - (1 - F_{k_v}(x))) \\
            &= \sums{x=0}{\infty}
                \underbrace{(1 - F_{k_v}(x))^V}_{\ge 0}
                \underbrace{F_{k_v}(x) }_{\ge 0} \ge 0
        \end{align}
        which is a sum of non-negative terms, since \( F_{k_v}(x) \in [0, 1] \) is a probability.
        This gives
        \begin{align}
            \E(M_{V+1}) \le \E(M_V)
        \end{align}
    \end{proof}

    \section{New instances of \fwName}
        In this section we provide additional details on loss functions, particularly the weighted sum fusion, and the DDC~\cite{kampffmeyerDeepDivergencebasedApproach2019} clustering module.
The loss functions used to train the new instances are on the form
\begin{align}
    \TOTLoss = \SVSSLWeight \SVSSLLoss + \MVSSLWeight \MVSSLLoss + \CMWeight \CMLoss
\end{align}
where \( \SVSSLLoss \), \( \MVSSLLoss \), and \( \CMLoss \) denote the losses from the SV-SSL, MV-SSL, and CM components, respectively.
Note that the losses \( \SVSSLLoss \) and \( \MVSSLLoss \) correspond to the losses in Section 5 of the main paper.
\( (\SVSSLWeight, \MVSSLWeight, \CMWeight) \) are optional weights for the respective losses, which are all set to \( 1 \) unless specified otherwise.

\customparagraph{Connection between InfoDDC and contrastive self-supervised learning}
    For two views \( u \neq v \in 1, \dots, V \), contrastive SSL can be regarded as variational maximization of the mutual information
    \begin{align}
        I(\vec z^{(v)}, \vec z^{(u)})
    \end{align}
    where \( \vec z^{(v)} \) and \( \vec z^{(u)} \) have \emph{multi-variate, continuous} distributions in \( \real^d \).

    In InfoDDC, we instead maximize mutual information between pairs of \emph{uni-variate, discrete} random variables
    \begin{align}
        I(c^{(v)}, c^{(u)})
    \end{align}
    where we assume that the distributions of \( c^{(v)} \) and \( c^{(u)} \) are given by the view-specific representations
    \begin{align}
        \Prob(c^{(w)} = i) = z^{(w)}_{[i]},\quad i = 1, \dots, d, \quad w \in \{u, v\}
    \end{align}
    where \( z^{(w)}_{[i]} \) denotes component \( i \) of the view-specific representation \( \vec z^{(w)} = f^{(w)}(\vec x^{(w)}) \).
    Hence, although InfoDDC might appear similar to CA-based methods, the maximization of mutual information is done for different pairs of random variables.

\customparagraph{Weighted sum fusion.}
    As~\cite{trostenReconsideringRepresentationAlignment2021}, we implement the weighted sum fusion as
    \begin{align}
        \vec z_i = \sums{v=1}{V} w^{(v)} \vec z^{(v)}_i,
    \end{align}
    where the weights \( w^{(1)}, \dots, w^{(V)} \) are non-negative and sum to \( 1 \).
    These constraints are implemented by keeping a vector of trainable, un-normalized weights, from which \( w^{(1)}, \dots, w^{(V)} \) can be computed by applying the softmax function.

 \customparagraph{DDC clustering module.}
    The DDC~\cite{kampffmeyerDeepDivergencebasedApproach2019} clustering module consists of two fully-connected layers.
    The first layer calculates the hidden representation \( \vec h_i \in \real^{D_{DDC}} \) from the fused representation \( \vec z_i \).
    The dimensionality of the hidden representation, \( D_{DDC} \) is a hyperparameter set to \( 100 \) for all models.
    The second layer computes the cluster membership vector \( \vec \alpha_i \in \real^k \) from the hidden representation.

    DDC's loss function consists of three terms
    \begin{align}
        \CMLoss_{\text{DDC}} = \cl L_{\text{DDC, 1}} + \cl L_{\text{DDC, 2}} + \cl L_{\text{DDC, 3}}.
    \end{align}
    The three terms encourage
    \begin{enumerate*}[label=(\roman*)]
        \item separable and compact clusters in the hidden space;
        \item orthogonal cluster membership vectors; and
        \item cluster membership vectors close to simplex corners,
    \end{enumerate*}
    respectively.

    The first term maximizes the pairwise Cauchy-Schwarz divergence~\cite{jenssenCauchySchwarzDivergence2006} between clusters (represented as probability densities) in the space of hidden representations
    \begin{align}
        &\cl L_{\text{DDC, 1}} =\\
        &\binom{k}{2}^{-1}~~\sums{a=1}{k-1}\sums{b=a}{k} \frac{
            \sums{i=1}{n}\sums{j=1}{n} \alpha_{ia} \kappa_{ij} \alpha_{jb}
        }{
            \sqrt{\sums{i=1}{n}\sums{j=1}{n} \alpha_{ia} \kappa_{ij} \alpha_{ja} \sums{i=1}{n}\sums{j=1}{n} \alpha_{ib} \kappa_{ij} \alpha_{jb} }
        }
    \end{align}
    where \( \kappa_{ij} = \exp\lrp{-\frac{||\vec h_i - \vec h_j||^2}{2 \sigma^2}} \) and \( \sigma \) is a hyperparameter.
    Following~\cite{kampffmeyerDeepDivergencebasedApproach2019}, we set \( \sigma \) to \( 15\% \) of the median pairwise difference between the hidden representations.

    The second term minimizes the pairwise inner product between cluster membership vectors
    \begin{align}
        \cl L_{\text{DDC, 2}} = \frac{2}{n(n-1)} \sums{i=1}{n-1}\sums{j=i+1}{n} \vec \alpha_i \vec \alpha_j\T.
    \end{align}

    The third term encourages cluster membership vectors to be close to the corners of the probability simplex in \( \real^k \)
    \begin{align}
        &\cl L_{\text{DDC, 3}} =\\
        &\binom{k}{2}^{-1}~~\sums{a=1}{k-1}\sums{b=a}{k} \frac{
            \sums{i=1}{n}\sums{j=1}{n} m_{ia} \kappa_{ij} m_{jb}
        }{
             \sqrt{\sums{i=1}{n}\sums{j=1}{n} m_{ia} \kappa_{ij} m_{ja} \sums{i=1}{n}\sums{j=1}{n} m_{ib} \kappa_{ij} m_{jb} }
        }
    \end{align}
    where \( m_{ia} = \exp(-||\vec \alpha_i - \vec e_a||^2) \), and \( \vec e_a \) is the \( a \)-th simplex corner.

    \section{Experiments}
        \subsection{Datasets}
    \begin{table*}
        \centering
        \caption{Dataset details. \( n \) = number of instances, \( v \) = number of views, \( k \) = number of classes/clusters, \( n_\text{small} \) = number of instances in smallest class, \( n_\text{big} \) = number of instances in largest class, Dim.~= view dimensions.}
        \label{tab:datasets}
        \undoCaptionSep
        \tableFontSize

\bgroup
\setlength{\tabcolsep}{1.3mm}
\begin{tabular}{lrrrrrcl}
    \toprule
    \cthead{Dataset}                                           & \cthead{\( n \)} & \cthead{\( v \)} & \cthead{\( k \)} & \cthead{\( n_\text{small} \)} & \cthead{\( n_\text{big} \)} & \cthead{Dim.\ }                    & \cthead{Licence} \\ \cmidrule(lr){1-1} \cmidrule(lr){2-8}
    NoisyMNIST~\cite{lecunGradientbasedLearningApplied1998}   & \( 70000 \)     & \( 2 \)         & \( 10 \)        & \( 6313 \)                   & \( 7877 \)                 & \( (28 \times 28)^{2} \)          & CC BY-SA 3.0 \\
    NoisyFashion~\cite{xiaoFashionMNISTNovelImage2017}        & \( 70000 \)     & \( 2 \)         & \( 10 \)        & \( 7000 \)                   & \( 7000 \)                 & \( (28 \times 28)^{2} \)          & MIT \\
    EdgeMNIST~\cite{lecunGradientbasedLearningApplied1998}    & \( 70000 \)     & \( 2 \)         & \( 10 \)        & \( 6313 \)                   & \( 7877 \)                 & \( (28 \times 28)^{2} \)          & CC BY-SA 3.0 \\
    EdgeFashion~\cite{xiaoFashionMNISTNovelImage2017}         & \( 70000 \)     & \( 2 \)         & \( 10 \)        & \( 7000 \)                   & \( 7000 \)                 & \( (28 \times 28)^{2} \)          & MIT \\
    COIL-20~\cite{neneColumbiaObjectImage1996}               & \( 480 \)       & \( 3 \)         & \( 20 \)        & \( 24 \)                     & \( 24 \)                   & \( (64 \times 64)^{3} \)          & None \\
    Caltech7~\cite{fei-feiLearningGenerativeVisual2007}       & \( 1474 \)      & \( 6 \)         & \( 7 \)         & \( 34 \)                     & \( 798 \)                  & \( 48, 40, 254, 1984, 512, 928 \) & CC BY 4.0 \\
    Caltech20~\cite{fei-feiLearningGenerativeVisual2007}      & \( 2386 \)      & \( 6 \)         & \( 20 \)        & \( 33 \)                     & \( 798 \)                  & \( 48, 40, 254, 1984, 512, 928 \) & CC BY 4.0 \\
    PatchedMNIST~\cite{lecunGradientbasedLearningApplied1998} & \( 21770 \)     & \( 12 \)        & \( 3 \)         & \( 6903 \)                   & \( 7877 \)                 & \( (28 \times 28)^{12} \)         & CC BY-SA 3.0 \\
    \bottomrule
\end{tabular}

\egroup

    \end{table*}

    Dataset details are listed in Table~\ref{tab:datasets}.
    The code repository includes pre-processed Caltech7 and Caltech20 datasets.
    The other datasets can be generated by following the instructions in \suppdir{README.md} (these could not be included in the archive due to limitations on space).

    \customparagraph{Caltech details.}
        We use the same features and subsets of the Caltech101~\cite{fei-feiLearningGenerativeVisual2007} dataset as~\cite{huangMultiviewSpectralClustering2019}.
        \begin{itemize}
            \item \textbf{Features:}  Gabor, Wavelet Moments, CENsus TRansform hISTogram (CENTRIST), Histogram of Oriented Gradients (HOG), GIST, and Local Binary Patterns (LBP).
            \item \textbf{Caltech7 classes:} Face, Motorbikes, Dolla-Bill, Garfield, Snoopy, Stop-Sign, Windsor-Chair.
            \item \textbf{Caltech20 classes:} Face, Leopards, Motorbikes, Binocular, Brain, Camera,Car-Side, Dolla-Bill, Ferry, Garfield, Hedgehog, Pagoda, Rhino, Snoopy, Stapler, Stop-Sign, Water-Lilly, WindsorChair, Wrench, Yin-yang.
        \end{itemize}

\subsection{Hyperparameters}
    \begin{table*}
        \centering
        \caption{Network architectures.}
        \label{tab:arch}
        \tableFontSize
        \AllArch
    \end{table*}
    \customparagraph{Network architectures.}
    The encoder and decoder architectures are listed in Table~\ref{tab:arch}.
    MLP encoders/decoders are used for Caltech7 and Caltech20 as these contain vector data.
    The other datasets contain images, so CNN encoders and decoders are used for them.

    \customparagraph{Other hyperparameters.}
        \begin{table*}
            \centering
            \caption{Hyperparameters used to train the models. \EAMCLearningRate}
            \label{tab:hyperparametersAll}
            \tableFontSize
            \hyperparametersAll
        \end{table*}
        Table~\ref{tab:hyperparametersAll} lists other hyperparameters used for the baselines and new instances.

\subsection{Computational resources}
    We run our experiments on a Kubernetes cluster, where jobs are allocated to nodes with
    Intel(R) Xeon(R) E5-2623 v4 or Intel(R) Xeon(R) Silver 4210 CPUs (\( 2 \) cores allocated per job);
    and Nvidia GeForce GTX 1080 Ti or Nvidia GeForce RTX 2080 Ti GPUs.
    Each job has \( 16 \) GB RAM available.

    With this setup, \( 5 \) training runs on NoisyMNIST, NoisyFashion, EdgeMNIST, and EdgeFashion take approximately \( 24 \) hours.
    Training times for the other datasets are approximately between \( 1 \) and \( 3 \) hours.

    The Dockerfile used to build our docker image can be found in the code repository.

\subsection{Evaluation protocol}
    \customparagraph{Metrics.}
        We measure performance using the accuracy
        \begin{align}
            \text{ACC} = \max\limits_{m \in \cl M} \frac{\sum_{i=1}^{n} \delta(m(\hat y_i) - y_i)}{n}
        \end{align}
        where \( \delta(\cdot) \) is the Kronecker-delta, \( \hat y_i \) is the predicted cluster of instance \( i \), and \( y_i \) is the ground truth label of instance \( i \).
        The maximum runs over \( \cl M \), which is the set of all bijective mappings from \( \{ 1, \dots, k \} \) to itself.

        We also compute the normalized mutual information
        \begin{align}
            \text{NMI} = \frac{MI(\hat{\vec y}, \vec y)}{ \frac{1}{2}(H(\hat{\vec y}) + H(\vec y))}
        \end{align}
        where \( \hat{\vec y} = [\hat y_1, \dots, \hat y_n] \), \( \vec y = [y_1, \dots, y_n] \), \( MI(\cdot, \cdot) \) and \( H(\cdot) \) denotes the mutual information and entropy, respectively.

    \customparagraph{Uncertainty estimation.}
        The uncertainty of our performance statistic can be estimated using bootstrapping.
        Suppose the \( R \) training runs result in the \( R \) tuples
        \begin{align}
            (L_1, M_1), \dots, (L_R, M_R)
        \end{align}
        where \( L_i \) is the final loss of run \( i \), and \( M_i \) is resulting performance metric for run \( i \).
        We then sample \( B \) bootstrap samples uniformly from the original results
        \begin{align}
            & (L^b_j, M^b_j) \sim \text{Uniform}\{ (L_1, M_1), \dots, (L_R, M_R) \},\\
            \nonumber
            & \qquad j = 1, \dots, R, \quad b = 1, \dots B.
        \end{align}
        The performance statistic for bootstrap sample \( b \) is then given by
        \begin{align}
            M_{\star}^b = M^b_{j^b_{\star}}, \quad j^b_{\star} = \arg\min\limits_{j=1, \dots, R} \{ L^b_j \}.
        \end{align}
        We then estimate the uncertainty of the performance statistic by computing the standard deviation of the bootstrap statistics \( M_\star^1, \dots M_\star^B \)
        \begin{align}
            \hat\sigma_{M_\star} = \sqrt{\frac{\sum_{b=1}^{B} (M_{\star}^b - \bar{M}_{\star})^2 }{B-1}}, \text{ where}\quad \bar{M}_{\star} = \frac{\sum_{b=1}^{B} M^\star_b }{B}.
        \end{align}

\subsection{Results}
    \customparagraph{Evaluation results.}
        The complete evaluation results are given in Table~\ref{tab:fullResults}.
        \begin{table*}
            \centering
            \caption{Clustering results. Standard deviations (obtained by bootstrapping) are shown in parentheses. \( ^\dagger \) = training ran out of memory, \( ^\ddagger \) = training resulted in NaN loss.}
            \label{tab:fullResults}
            \tableFontSize
            \setlength{\tabcolsep}{.8mm}
            \begin{tabular}{lcccccccc}
\toprule
& \multicolumn{2}{c}{NoisyMNIST}& \multicolumn{2}{c}{NoisyFashion}& \multicolumn{2}{c}{EdgeMNIST}& \multicolumn{2}{c}{EdgeFashion} \\
& \multicolumn{1}{c}{ACC} & \multicolumn{1}{c}{NMI}& \multicolumn{1}{c}{ACC} & \multicolumn{1}{c}{NMI}& \multicolumn{1}{c}{ACC} & \multicolumn{1}{c}{NMI}& \multicolumn{1}{c}{ACC} & \multicolumn{1}{c}{NMI} \\ \cmidrule(lr){2-3} \cmidrule(lr){4-5} \cmidrule(lr){6-7} \cmidrule(lr){8-9}
\dmsc    & \MTC{ 0.66 }~\STD{ 0.02 } & \MTC{ 0.67 }~\STD{ 0.01 } & \MTC{ 0.49 }~\STD{ 0.05 } & \MTC{ 0.48 }~\STD{ 0.03 } & \MTC{ 0.51 }~\STD{ 0.02 } & \MTC{ 0.47 }~\STD{ 0.02 } & \MTC{ 0.52 }~\STD{ 0.01 } & \MTC{ 0.47 }~\STD{ 0.00 } \\
\mvscn   & \MTC{ 0.15 }~\STD{ 0.00 } & \MTC{ 0.02 }~\STD{ 0.00 } & \MTC{ 0.14 }~\STD{ 0.00 } & \MTC{ 0.01 }~\STD{ 0.00 } & \MTC{ 0.14 }~\STD{ 0.00 } & \MTC{ 0.01 }~\STD{ 0.01 } & \MTC{ 0.12 }~\STD{ 0.00 } & \MTC{ 0.03 }~\STD{ 0.00 } \\
\eamc    & \MTC{ 0.83 }~\STD{ 0.04 } & \MTC{ 0.90 }~\STD{ 0.02 } & \MTC{ 0.61 }~\STD{ 0.02 } & \MTC{ 0.71 }~\STD{ 0.02 } & \MTC{ 0.76 }~\STD{ 0.05 } & \MTC{ 0.79 }~\STD{ 0.03 } & \MTC{ 0.51 }~\STD{ 0.03 } & \MTC{ 0.47 }~\STD{ 0.01 } \\
\simvc   & \MTC{ \BEST{1.00} }~\STD{ 0.02 } & \MTC{ \BEST{1.00} }~\STD{ 0.02 } & \MTC{ 0.52 }~\STD{ 0.02 } & \MTC{ 0.51 }~\STD{ 0.02 } & \MTC{ 0.89 }~\STD{ 0.06 } & \MTC{ 0.90 }~\STD{ 0.04 } & \MTC{ 0.61 }~\STD{ 0.01 } & \MTC{ 0.56 }~\STD{ 0.02 } \\
\comvc   & \MTC{ \BEST{1.00} }~\STD{ 0.00 } & \MTC{ \BEST{1.00} }~\STD{ 0.00 } & \MTC{ 0.67 }~\STD{ 0.03 } & \MTC{ 0.68 }~\STD{ 0.03 } & \MTC{ \BEST{0.97} }~\STD{ 0.08 } & \MTC{ \BEST{0.94} }~\STD{ 0.07 } & \MTC{ 0.56 }~\STD{ 0.03 } & \MTC{ 0.52 }~\STD{ 0.01 } \\
\mvae    & \MTC{ 0.98 }~\STD{ 0.05 } & \MTC{ 0.96 }~\STD{ 0.02 } & \MTC{ 0.62 }~\STD{ 0.02 } & \MTC{ 0.60 }~\STD{ 0.01 } & \MTC{ 0.85 }~\STD{ 0.01 } & \MTC{ 0.76 }~\STD{ 0.01 } & \MTC{ 0.58 }~\STD{ 0.01 } & \MTC{ \BEST{0.64} }~\STD{ 0.00 } \\
\saekm   & \MTC{ 0.74 }~\STD{ 0.03 } & \MTC{ 0.71 }~\STD{ 0.00 } & \MTC{ 0.58 }~\STD{ 0.02 } & \MTC{ 0.59 }~\STD{ 0.01 } & \MTC{ 0.60 }~\STD{ 0.00 } & \MTC{ 0.57 }~\STD{ 0.00 } & \MTC{ 0.54 }~\STD{ 0.00 } & \MTC{ 0.58 }~\STD{ 0.00 } \\
\sae     & \MTC{ \BEST{1.00} }~\STD{ 0.04 } & \MTC{ \BEST{1.00} }~\STD{ 0.03 } & \MTC{ 0.69 }~\STD{ 0.06 } & \MTC{ 0.65 }~\STD{ 0.05 } & \MTC{ 0.88 }~\STD{ 0.11 } & \MTC{ 0.88 }~\STD{ 0.09 } & \MTC{ 0.60 }~\STD{ 0.01 } & \MTC{ 0.58 }~\STD{ 0.01 } \\
\caekm   & \MTC{ \BEST{1.00} }~\STD{ 0.00 } & \MTC{ 0.99 }~\STD{ 0.00 } & \MTC{ 0.63 }~\STD{ 0.07 } & \MTC{ 0.73 }~\STD{ 0.03 } & \MTC{ 0.38 }~\STD{ 0.03 } & \MTC{ 0.31 }~\STD{ 0.02 } & \MTC{ 0.39 }~\STD{ 0.04 } & \MTC{ 0.34 }~\STD{ 0.02 } \\
\cae     & \MTC{ \BEST{1.00} }~\STD{ 0.00 } & \MTC{ 0.99 }~\STD{ 0.00 } & \MTC{ \BEST{0.80} }~\STD{ 0.02 } & \MTC{ \BEST{0.77} }~\STD{ 0.01 } & \MTC{ 0.89 }~\STD{ 0.10 } & \MTC{ 0.90 }~\STD{ 0.09 } & \MTC{ \BEST{0.67} }~\STD{ 0.09 } & \MTC{ 0.62 }~\STD{ 0.06 } \\
\mimvc   & \MTC{ 0.90 }~\STD{ 0.05 } & \MTC{ 0.92 }~\STD{ 0.04 } & \MTC{ 0.54 }~\STD{ 0.03 } & \MTC{ 0.52 }~\STD{ 0.04 } & \MTC{ 0.62 }~\STD{ 0.04 } & \MTC{ 0.52 }~\STD{ 0.06 } & \MTC{ 0.43 }~\STD{ 0.01 } & \MTC{ 0.43 }~\STD{ 0.03 } \\
\mviic   & \MTC{ 0.52 }~\STD{ 0.04 } & \MTC{ 0.79 }~\STD{ 0.02 } & \MTC{ 0.52 }~\STD{ 0.07 } & \MTC{ 0.74 }~\STD{ 0.02 } & \MTC{ 0.31 }~\STD{ 0.04 } & \MTC{ 0.21 }~\STD{ 0.05 } & \MTC{ 0.52 }~\STD{ 0.04 } & \MTC{ 0.59 }~\STD{ 0.04 } \\
\bottomrule
\end{tabular}
            \begin{tabular}{lcccccccc}
\toprule
& \multicolumn{2}{c}{COIL-20}& \multicolumn{2}{c}{Caltech7}& \multicolumn{2}{c}{Caltech20}& \multicolumn{2}{c}{PatchedMNIST} \\
& \multicolumn{1}{c}{ACC} & \multicolumn{1}{c}{NMI}& \multicolumn{1}{c}{ACC} & \multicolumn{1}{c}{NMI}& \multicolumn{1}{c}{ACC} & \multicolumn{1}{c}{NMI}& \multicolumn{1}{c}{ACC} & \multicolumn{1}{c}{NMI} \\ \cmidrule(lr){2-3} \cmidrule(lr){4-5} \cmidrule(lr){6-7} \cmidrule(lr){8-9}
\dmsc    & \MTC{ - }\( ^\dagger \) ~\STD{ - } & \MTC{ - }\( ^\dagger \) ~\STD{ - } & \MTC{ 0.50 }~\STD{ 0.03 } & \MTC{ 0.50 }~\STD{ 0.02 } & \MTC{ 0.35 }~\STD{ 0.01 } & \MTC{ 0.55 }~\STD{ 0.00 } & \MTC{ - }\( ^\dagger \) ~\STD{ - } & \MTC{ - }\( ^\dagger \) ~\STD{ - } \\
\mvscn   & \MTC{ 0.21 }~\STD{ 0.00 } & \MTC{ 0.23 }~\STD{ 0.01 } & \MTC{ 0.29 }~\STD{ 0.02 } & \MTC{ 0.02 }~\STD{ 0.00 } & \MTC{ 0.13 }~\STD{ 0.01 } & \MTC{ 0.09 }~\STD{ 0.01 } & \MTC{ - }\( ^\dagger \) ~\STD{ - } & \MTC{ - }\( ^\dagger \) ~\STD{ - } \\
\eamc    & \MTC{ 0.39 }~\STD{ 0.15 } & \MTC{ 0.52 }~\STD{ 0.22 } & \MTC{ 0.44 }~\STD{ 0.02 } & \MTC{ 0.23 }~\STD{ 0.03 } & \MTC{ 0.22 }~\STD{ 0.04 } & \MTC{ 0.23 }~\STD{ 0.02 } & \MTC{ - }\( ^\ddagger \) ~\STD{ - } & \MTC{ - }\( ^\ddagger \) ~\STD{ - } \\
\simvc   & \MTC{ \BEST{0.90} }~\STD{ 0.04 } & \MTC{ \BEST{0.96} }~\STD{ 0.02 } & \MTC{ 0.41 }~\STD{ 0.02 } & \MTC{ 0.51 }~\STD{ 0.09 } & \MTC{ 0.34 }~\STD{ 0.02 } & \MTC{ 0.52 }~\STD{ 0.01 } & \MTC{ 0.84 }~\STD{ 0.04 } & \MTC{ 0.64 }~\STD{ 0.11 } \\
\comvc   & \MTC{ 0.87 }~\STD{ 0.03 } & \MTC{ \BEST{0.96} }~\STD{ 0.02 } & \MTC{ 0.38 }~\STD{ 0.01 } & \MTC{ 0.55 }~\STD{ 0.02 } & \MTC{ 0.34 }~\STD{ 0.01 } & \MTC{ 0.59 }~\STD{ 0.02 } & \MTC{ 0.73 }~\STD{ 0.12 } & \MTC{ 0.57 }~\STD{ 0.19 } \\
\mvae    & \MTC{ 0.74 }~\STD{ 0.02 } & \MTC{ 0.84 }~\STD{ 0.01 } & \MTC{ 0.47 }~\STD{ 0.02 } & \MTC{ 0.47 }~\STD{ 0.01 } & \MTC{ 0.40 }~\STD{ 0.01 } & \MTC{ 0.57 }~\STD{ 0.01 } & \MTC{ 0.94 }~\STD{ 0.00 } & \MTC{ 0.77 }~\STD{ 0.00 } \\
\saekm   & \MTC{ 0.88 }~\STD{ 0.04 } & \MTC{ 0.92 }~\STD{ 0.01 } & \MTC{ 0.44 }~\STD{ 0.03 } & \MTC{ 0.52 }~\STD{ 0.01 } & \MTC{ 0.45 }~\STD{ 0.02 } & \MTC{ 0.57 }~\STD{ 0.01 } & \MTC{ 0.87 }~\STD{ 0.00 } & \MTC{ 0.68 }~\STD{ 0.01 } \\
\sae     & \MTC{ 0.80 }~\STD{ 0.04 } & \MTC{ 0.93 }~\STD{ 0.02 } & \MTC{ 0.40 }~\STD{ 0.01 } & \MTC{ 0.54 }~\STD{ 0.07 } & \MTC{ 0.34 }~\STD{ 0.01 } & \MTC{ 0.44 }~\STD{ 0.03 } & \MTC{ 0.77 }~\STD{ 0.10 } & \MTC{ 0.59 }~\STD{ 0.17 } \\
\caekm   & \MTC{ 0.84 }~\STD{ 0.04 } & \MTC{ 0.94 }~\STD{ 0.02 } & \MTC{ 0.20 }~\STD{ 0.01 } & \MTC{ 0.05 }~\STD{ 0.00 } & \MTC{ 0.22 }~\STD{ 0.02 } & \MTC{ 0.27 }~\STD{ 0.02 } & \MTC{ 0.96 }~\STD{ 0.00 } & \MTC{ 0.85 }~\STD{ 0.00 } \\
\cae     & \MTC{ 0.87 }~\STD{ 0.01 } & \MTC{ \BEST{0.96} }~\STD{ 0.00 } & \MTC{ 0.36 }~\STD{ 0.01 } & \MTC{ 0.43 }~\STD{ 0.03 } & \MTC{ 0.31 }~\STD{ 0.02 } & \MTC{ 0.51 }~\STD{ 0.02 } & \MTC{ \BEST{0.99} }~\STD{ 0.00 } & \MTC{ \BEST{0.97} }~\STD{ 0.00 } \\
\mimvc   & \MTC{ 0.25 }~\STD{ 0.04 } & \MTC{ 0.54 }~\STD{ 0.03 } & \MTC{ 0.51 }~\STD{ 0.01 } & \MTC{ 0.60 }~\STD{ 0.04 } & \MTC{ \BEST{0.58} }~\STD{ 0.07 } & \MTC{ \BEST{0.63} }~\STD{ 0.03 } & \MTC{ \BEST{0.99} }~\STD{ 0.00 } & \MTC{ 0.96 }~\STD{ 0.00 } \\
\mviic   & \MTC{ 0.83 }~\STD{ 0.05 } & \MTC{ 0.94 }~\STD{ 0.02 } & \MTC{ \BEST{0.53} }~\STD{ 0.00 } & \MTC{ \BEST{0.63} }~\STD{ 0.04 } & \MTC{ 0.49 }~\STD{ 0.01 } & \MTC{ 0.61 }~\STD{ 0.01 } & \MTC{ 0.97 }~\STD{ 0.00 } & \MTC{ 0.90 }~\STD{ 0.01 } \\
\bottomrule
\end{tabular}
        \end{table*}

    \customparagraph{Ablation study -- Fusion and Clustering module.}
        \begin{table}
            \centering
            \caption{Accuracies from ablation studies with the Fusion and CM components.}
            \label{tab:ablationFusionCM}
            \setlength{\tabcolsep}{.9mm}
            \tableFontSize%
            \begin{subtable}[t]{\columnwidth}
                \centering
                \caption{Fusion}
                \rowcolors{2}{gray!25}{white}
\begin{tabular}{lcccc}
\toprule
& \multicolumn{2}{c}{NoisyMNIST}& \multicolumn{2}{c}{Caltech7}\\
Model & {\tiny Concat.} & {\tiny Weighted} & {\tiny Concat.} & {\tiny Weighted} \\ \cmidrule(lr){1-1} \cmidrule(lr){2-3} \cmidrule(lr){4-5}
\simvc   & \MTC{1.00} & \MTCDELTA{1.00}{0.00} & \MTC{0.36} & \MTCDELTA{0.41}{0.04} \\
\comvc   & \MTC{1.00} & \MTCDELTA{1.00}{0.00} & \MTC{0.42} & \MTCDELTA{0.38}{-0.04} \\
\sae     & \MTC{1.00} & \MTCDELTA{1.00}{0.00} & \MTC{0.36} & \MTCDELTA{0.40}{0.04} \\
\cae     & \MTC{1.00} & \MTCDELTA{1.00}{0.00} & \MTC{0.39} & \MTCDELTA{0.36}{-0.03} \\
\mimvc   & \MTC{0.93} & \MTCDELTA{0.90}{-0.03} & \MTC{0.36} & \MTCDELTA{0.51}{0.15} \\
\bottomrule
\end{tabular}
            \end{subtable}
            \begin{subtable}[t]{\columnwidth}
                \centering
                \caption{CM}
                \rowcolors{2}{gray!25}{white}
\begin{tabular}{lcccc}
\toprule
& \multicolumn{2}{c}{NoisyMNIST}& \multicolumn{2}{c}{Caltech7}\\
Model & {\tiny \( k \)-means } & {\tiny DDC} & {\tiny \( k \)-means } & {\tiny DDC} \\ \cmidrule(lr){1-1} \cmidrule(lr){2-3} \cmidrule(lr){4-5}
\simvc   & \MTC{0.67} & \MTCDELTA{1.00}{0.33} & \MTC{0.39} & \MTCDELTA{0.41}{0.01} \\
\comvc   & \MTC{0.56} & \MTCDELTA{1.00}{0.44} & \MTC{0.22} & \MTCDELTA{0.38}{0.16} \\
\sae     & \MTC{0.74} & \MTCDELTA{1.00}{0.26} & \MTC{0.44} & \MTCDELTA{0.40}{-0.04} \\
\cae     & \MTC{1.00} & \MTCDELTA{1.00}{0.00} & \MTC{0.20} & \MTCDELTA{0.36}{0.16} \\
\mimvc   & \MTC{0.14} & \MTCDELTA{0.90}{0.76} & \MTC{0.59} & \MTCDELTA{0.51}{-0.08} \\
\bottomrule
\end{tabular}
            \end{subtable}
        \end{table}
        Table~\ref{tab:ablationFusionCM} shows the results of ablation studies with the fusion and clustering module (CM) components.
        Since these components can not be completely removed, we instead replace more complicated components, with the simplest possible component.
        Thus, we replace weighted sum with concatenate for the fusion component, and DDC with \( k \)-means for the CM component.

        For the fusion component, we see that the weighted sum tends to improve over the concatenation.
        For the CM, we observe that the performance is better with DDC than with \( k \)-means on NoisyMNIST, but the improvement more varied on Caltech7.
        This is consistent with what we observed in the evaluation results in the main paper.

    \customparagraph{Reproducibility of original results.}
        \begin{table}
            \bgroup
            \centering
            \caption{Accuracies from our experiment vs. accuracies reported by the original authors.
            \( ^\dagger \) = method is originally evaluated on a slightly different dataset.}
            \label{tab:refResults}
            \tableFontSize%
            \setlength{\tabcolsep}{.9mm}
            \begin{tabular}{llcc}
\toprule
Model                                      & Dataset                 & Orig.\     & Ours \\ \midrule
\rowcolor{gray!20}                         & N-MNIST                 & \MTC{0.99} & \MTCDELTA{0.15}{-0.84} \\
\rowcolor{gray!20}\multirow{-2}{*}{\mvscn} & Caltech20               & \MTC{0.59} & \MTCDELTA{0.13}{-0.46} \\
\multirow{-1}{*}{\eamc}                    & E-MNIST                 & \MTC{0.67} & \MTCDELTA{0.76}{0.09} \\
\rowcolor{gray!20}                         & E-MNIST                 & \MTC{0.86} & \MTCDELTA{0.89}{0.03} \\
\rowcolor{gray!20}                         & E-Fashion               & \MTC{0.57} & \MTCDELTA{0.61}{0.04} \\
\rowcolor{gray!20}\multirow{-3}{*}{\simvc} & COIL-20                 & \MTC{0.78} & \MTCDELTA{0.90}{0.12} \\
                                           & E-MNIST                 & \MTC{0.96} & \MTCDELTA{0.97}{0.01} \\
                                           & E-Fashion               & \MTC{0.60} & \MTCDELTA{0.56}{-0.04} \\
\multirow{-3}{*}{\comvc}                   & COIL-20                 & \MTC{0.89} & \MTCDELTA{0.87}{-0.02} \\
\rowcolor{gray!20}                         & N-MNIST\( ^\dagger \)   & \MTC{1.00} & \MTCDELTA{0.98}{-0.02} \\
\rowcolor{gray!20}                         & N-Fashion\( ^\dagger \) & \MTC{0.91} & \MTCDELTA{0.62}{-0.29} \\
\rowcolor{gray!20}\multirow{-3}{*}{\mvae}  & COIL-20                 & \MTC{0.98} & \MTCDELTA{0.74}{-0.24} \\
\bottomrule
\end{tabular}
            \egroup
        \end{table}
        Table~\ref{tab:refResults} compares the results of our re-implementation of the baselines, to the results reported by the original authors.
        The comparison shows large differences in performance for several methods, and the differences are particularly large for MvSCN and Multi-VAE.
        For MvSCN, we do not use the same autoencoder preprocessing of the data.
        We also had difficulties getting the Cholesky decomposition to converge during training.
        For MultiVAE, we note that NoisyMNIST and NoisyFashion are generated without noise in the original paper, possibly resulting in datasets that are simpler to cluster.
        We were however not able to determine the reason for the difference in performance on COIL-20.
    
        Additionally, all methods use different network architectures and evaluation protocols in the original publications, making it difficult to accurately compare performance between methods and their implementations.
        This illustrates the difficulty of reproducing and comparing results in deep MVC, highlighting the need for a unified framework with a consistent evaluation protocol and an open-source implementation.

    \customparagraph{Sensitivity to hyperparameters}
        Table~\ref{tab:hpar} shows the results of hyperparameter sweeps for the following hyperparameters:
        \begin{itemize}
            \item Weight of reconstruction loss (\( \SVSSLWeight \)).
            \item Weight of contrastive loss (\( \MVSSLWeight \)).
            \item Temperature in contrastive loss (\( \tau \)).
            \item Weight of entropy regularization (\( \lambda \)).
        \end{itemize}
        We emphasize that these results were \emph{not} used to tune hyperparameters for the new instances.
        Rather, they are included to investigate how robust these methods are towards changes in the hyperparameter configuration.
        The results show that the new instances are mostly insensitive to changes in their hyperparameters.
        We however observe two cases where the hyperparameter configurations can have significant impact on the model performance.
        First, \cae shows a drop in performance when the weight of the contrastive loss is set to high on Caltech7 (Table~\ref{tab:hparConweight}).
        This is consistent with our observations regarding contrastive alignment on datasets with many views.
        Second, \mimvc and \mviic performs worse when the entropy regularization weight is set too low, indicating that sufficient regularization is required for these models to perform well.

        \begin{table*}
            \centering
            \caption{Results (NMI) of hyperparameter sweeps for the new instances.}
            \label{tab:hpar}
            \tableFontSize
            \setlength{\tabcolsep}{1mm}
            \begin{subtable}{\textwidth}
                \centering
                \caption{Weight of reconstruction loss (\( \SVSSLWeight \)).}
                \label{tab:hparRec}
                \begin{tabular}{lcccccccc}
\toprule
& \multicolumn{4}{c}{NoisyMNIST}& \multicolumn{4}{c}{Caltech7}\\
Rec.~weight & 0.01& 0.1& 1.0& 10.0& 0.01& 0.1& 1.0& 10.0\\ \cmidrule(lr){1-1} \cmidrule(lr){2-5} \cmidrule(lr){6-9}
\sae     & \MTC{ 1.00 }~\STD{ 0.03 } & \MTC{ 0.94 }~\STD{ 0.03 } & \MTC{ 1.00 }~\STD{ 0.03 } & \MTC{ 0.94 }~\STD{ 0.01 } & \MTC{ 0.41 }~\STD{ 0.01 } & \MTC{ 0.41 }~\STD{ 0.03 } & \MTC{ 0.44 }~\STD{ 0.02 } & \MTC{ 0.45 }~\STD{ 0.02 } \\
\cae     & \MTC{ 0.99 }~\STD{ 0.00 } & \MTC{ 0.99 }~\STD{ 0.00 } & \MTC{ 0.99 }~\STD{ 0.00 } & \MTC{ 0.99 }~\STD{ 0.00 } & \MTC{ 0.40 }~\STD{ 0.05 } & \MTC{ 0.33 }~\STD{ 0.02 } & \MTC{ 0.34 }~\STD{ 0.03 } & \MTC{ 0.49 }~\STD{ 0.04 } \\
\caekm   & \MTC{ 0.74 }~\STD{ 0.02 } & \MTC{ 0.70 }~\STD{ 0.04 } & \MTC{ 0.74 }~\STD{ 0.03 } & \MTC{ 0.93 }~\STD{ 0.02 } & \MTC{ 0.07 }~\STD{ 0.01 } & \MTC{ 0.05 }~\STD{ 0.00 } & \MTC{ 0.04 }~\STD{ 0.01 } & \MTC{ 0.04 }~\STD{ 0.02 } \\
\bottomrule
\end{tabular}
            \end{subtable}
            \begin{subtable}{\textwidth}
                \centering
                \caption{Weight of contrastive loss (\( \MVSSLWeight \)).}
                \label{tab:hparConweight}
                \begin{tabular}{lcccccccc}
\toprule
& \multicolumn{4}{c}{NoisyMNIST}& \multicolumn{4}{c}{Caltech7}\\
Con.~weight & 0.01& 0.1& 1.0& 10.0& 0.01& 0.1& 1.0& 10.0\\ \cmidrule(lr){1-1} \cmidrule(lr){2-5} \cmidrule(lr){6-9}
\cae     & \MTC{ 1.00 }~\STD{ 0.00 } & \MTC{ 0.99 }~\STD{ 0.00 } & \MTC{ 0.99 }~\STD{ 0.00 } & \MTC{ 0.99 }~\STD{ 0.00 } & \MTC{ 0.46 }~\STD{ 0.02 } & \MTC{ 0.40 }~\STD{ 0.05 } & \MTC{ 0.19 }~\STD{ 0.03 } & \MTC{ 0.09 }~\STD{ 0.01 } \\
\caekm   & \MTC{ 0.89 }~\STD{ 0.01 } & \MTC{ 0.73 }~\STD{ 0.03 } & \MTC{ 0.77 }~\STD{ 0.01 } & \MTC{ 0.67 }~\STD{ 0.02 } & \MTC{ 0.04 }~\STD{ 0.02 } & \MTC{ 0.04 }~\STD{ 0.01 } & \MTC{ 0.06 }~\STD{ 0.01 } & \MTC{ 0.05 }~\STD{ 0.00 } \\
\bottomrule
\end{tabular}
            \end{subtable}
            \begin{subtable}{\textwidth}
                \centering
                \caption{Temperature in the contrastive loss (\( \tau \)).}
                \label{tab:hparContau}
                \begin{tabular}{lcccccccc}
\toprule
& \multicolumn{4}{c}{NoisyMNIST}& \multicolumn{4}{c}{Caltech7}\\
\( \tau \) & 0.01& 0.07& 0.1& 1.0& 0.01& 0.07& 0.1& 1.0\\ \cmidrule(lr){1-1} \cmidrule(lr){2-5} \cmidrule(lr){6-9}
\cae     & \MTC{ 0.99 }~\STD{ 0.00 } & \MTC{ 1.00 }~\STD{ 0.00 } & \MTC{ 0.99 }~\STD{ 0.00 } & \MTC{ 1.00 }~\STD{ 0.00 } & \MTC{ 0.31 }~\STD{ 0.03 } & \MTC{ 0.39 }~\STD{ 0.01 } & \MTC{ 0.35 }~\STD{ 0.02 } & \MTC{ 0.48 }~\STD{ 0.01 } \\
\caekm   & \MTC{ 0.99 }~\STD{ 0.00 } & \MTC{ 0.91 }~\STD{ 0.02 } & \MTC{ 0.74 }~\STD{ 0.02 } & \MTC{ 0.78 }~\STD{ 0.05 } & \MTC{ 0.34 }~\STD{ 0.01 } & \MTC{ 0.05 }~\STD{ 0.01 } & \MTC{ 0.06 }~\STD{ 0.01 } & \MTC{ 0.46 }~\STD{ 0.01 } \\
\bottomrule
\end{tabular}
            \end{subtable}
            \begin{subtable}{\textwidth}
                \centering
                \caption{Weight of the entropy regularization (\( \lambda \)).}
                \label{tab:hparLambda}
                \begin{tabular}{lcccccccc}
\toprule
& \multicolumn{4}{c}{NoisyMNIST}& \multicolumn{4}{c}{Caltech7}\\
\( \lambda \) & 0.5& 1.5& 5.0& 10.0& 0.5& 1.5& 5.0& 10.0\\ \cmidrule(lr){1-1} \cmidrule(lr){2-5} \cmidrule(lr){6-9}
\mviic   & \MTC{ 0.03 }~\STD{ 0.01 } & \MTC{ 0.81 }~\STD{ 0.01 } & \MTC{ 0.82 }~\STD{ 0.00 } & \MTC{ 0.82 }~\STD{ 0.00 } & \MTC{ 0.04 }~\STD{ 0.01 } & \MTC{ 0.64 }~\STD{ 0.04 } & \MTC{ 0.60 }~\STD{ 0.01 } & \MTC{ 0.52 }~\STD{ 0.01 } \\
\mimvc   & \MTC{ 0.21 }~\STD{ 0.02 } & \MTC{ 0.37 }~\STD{ 0.02 } & \MTC{ 0.84 }~\STD{ 0.04 } & \MTC{ 0.94 }~\STD{ 0.07 } & \MTC{ 0.60 }~\STD{ 0.06 } & \MTC{ 0.60 }~\STD{ 0.02 } & \MTC{ 0.57 }~\STD{ 0.01 } & \MTC{ 0.51 }~\STD{ 0.01 } \\
\bottomrule
\end{tabular}
            \end{subtable}
        \end{table*}

    \section{Potential negative societal impacts}
        As is the case with most methodological research, our work can be applied to downstream applications with negative societal impact -- for instance by reflecting biases in the dataset the model was trained on.
We note that in unsupervised learning, it is particularly important to check what a model has learned, due to the lack of label supervision.
This is crucial if the models are used to make high-stakes decisions.

    \bibliographystyle{unsrtnat}
    \bibliography{references}

\end{document}